\documentclass[letterpaper, 10 pt, conference]{ieeeconf}  

\IEEEoverridecommandlockouts                              

\overrideIEEEmargins                                      
\usepackage{times}
\usepackage{amsmath}

\usepackage{amsthm}
\usepackage{amsfonts}
\usepackage{color}
\usepackage{array}
\usepackage{graphicx}
\graphicspath{ {./figs/} }
\usepackage{wrapfig}
\usepackage{mathtools}
\usepackage{amssymb}
\usepackage{floatrow}
\usepackage[ruled,noend,linesnumbered]{algorithm2e}
\usepackage{wasysym}
\usepackage{todonotes}
\usepackage{tabularx}
\usepackage{enumerate}
\usepackage{multirow}
\usepackage{diagbox}
\usepackage{setspace}
\newcolumntype{b}{X}
\newcolumntype{m}{>{\hsize=.8\hsize}X}
\newcolumntype{s}{>{\hsize=.5\hsize}X}

\newtheorem{theorem}{Theorem}
\newtheorem{problem}{Problem}

\newtheorem{corollary}{Corollary}

\newcommand{\task}{\sigma}

\newcommand{\traj}{\xi}
\newcommand{\state}{x}
\newcommand{\statespace}{\mathcal{X}}
\newcommand{\safeset}{\mathcal{S}}

\newcommand{\numsafe}{N_\textrm{dem}}

\newcommand{\control}{u}
\newcommand{\controlset}{\mathcal{U}}

\newcommand{\trajxu}{\traj_{xu}}
\newcommand{\trajx}{\traj_\state}
\newcommand{\traju}{\traj_\control}

\newcommand{\constraintspace}{\mathcal{C}}

\newcommand{\cstate}{\kappa}

\newcommand{\demj}{\traj_\textrm{loc}^{j}}

\newcommand{\eq}{\textrm{eq}}
\newcommand{\ineq}{\textrm{ineq}}

\newcommand{\stat}{\mathbf{s}}
\newcommand{\phisep}{\phi_\textrm{sep}}
\newcommand{\safeprob}{\delta}
\newcommand{\Prob}{\textrm{Pr}}
\newcommand{\plan}{\textrm{plan}}
\newcommand{\ttight}{\mathbf{t}_\textrm{tight}}
\newcommand{\tnontight}{\mathbf{t}_{\neg\textrm{tight}}}
\newcommand{\trobust}{\mathbf{t}_{\textrm{rob}}}

\makeatletter
\renewcommand{\fnum@figure}{\small Fig. \thefigure}
\makeatother
\usepackage{chngcntr}
\usepackage{apptools}
\AtAppendix{\counterwithin{lemma}{section}}
\AtAppendix{\counterwithin{assumption}{section}}
\AtAppendix{\counterwithin{theorem}{section}}
\AtAppendix{\counterwithin{corollary}{section}}





\title{\LARGE \bf
Gaussian Process Constraint Learning for Scalable \\ Chance-Constrained Motion Planning from Demonstrations
}

\author{Glen Chou*, Hao Wang*, and Dmitry Berenson
\thanks{*G. Chou and H. Wang contributed equally to this work.}
\thanks{All authors are affiliated with the University of Michigan, Ann Arbor, MI 48109, USA. $\texttt{\{gchou, haowwang, dmitryb\}@umich.edu}$}%
}
\begin{document}

\maketitle
\thispagestyle{empty}
\pagestyle{empty}

\begin{abstract}

We propose a method for learning constraints represented as Gaussian processes (GPs) from locally-optimal demonstrations. Our approach uses the Karush-Kuhn-Tucker (KKT) optimality conditions to determine where on the demonstrations the constraint is tight, and a scaling of the constraint gradient at those states. 
We then train a GP representation of the constraint which is consistent with and which generalizes this information. We further show that the GP uncertainty can be used within a kinodynamic RRT to plan probabilistically-safe trajectories, and that we can exploit the GP structure within the planner to exactly achieve a specified safety probability. We demonstrate our method can learn complex, nonlinear constraints demonstrated on a 5D nonholonomic car, a 12D quadrotor, and a 3-link planar arm, all while requiring minimal prior information on the constraint. Our results suggest the learned GP constraint is accurate, outperforming previous constraint learning methods that require more \textit{a priori} knowledge.%
\end{abstract}

\section{Introduction}
The need for robots that can safely perform tasks in unstructured environments has increased as robots are deployed in the real world. One popular paradigm for teaching robots tasks is learning from demonstration (LfD) \cite{lfd_survey,apprentice_learning} via inverse optimal control (IOC), which assumes the demonstrator is solving an \textit{unconstrained} optimization, and learns the underlying reward/cost function. However, hard constraints are crucial for safety-critical applications and are not well-enforced by these methods \cite{ijrr}. To address safety in LfD, recent work has represented tasks as \textit{constrained} optimization problems, and learns the unknown cost function and constraints from demonstrations \cite{ral, wafr, corl, menner, toussaint} via the Karush-Kuhn-Tucker (KKT) optimality conditions, enabling constraint learning for complex manipulation and mobile robotics tasks. However, these methods require that the unknown constraints can be described by an \textit{a priori} known representation or parameterization (e.g. as a union of axis-aligned boxes \cite{ral, corl20}), restricting these methods to the learning of highly-structured constraints. Moreover, such representations can be highly inefficient (e.g. many boxes may be required to approximate complex constraints), leading to a computational burden that makes it challenging to scale these methods up to learn realistic constraints. Consider a demonstrator steering a quadrotor to avoid collisions with a tree (Fig. \ref{fig:tree}). On the one hand, we are unlikely to obtain an efficient constraint representation for learning trees \textit{a priori} unless we can learn one (e.g. via deep learning) using an enormous number of demonstrations, and on the other hand, a prohibitive number of boxes is needed to represent the tree.

\begin{figure}[h]
\centering
\includegraphics[width=\linewidth]{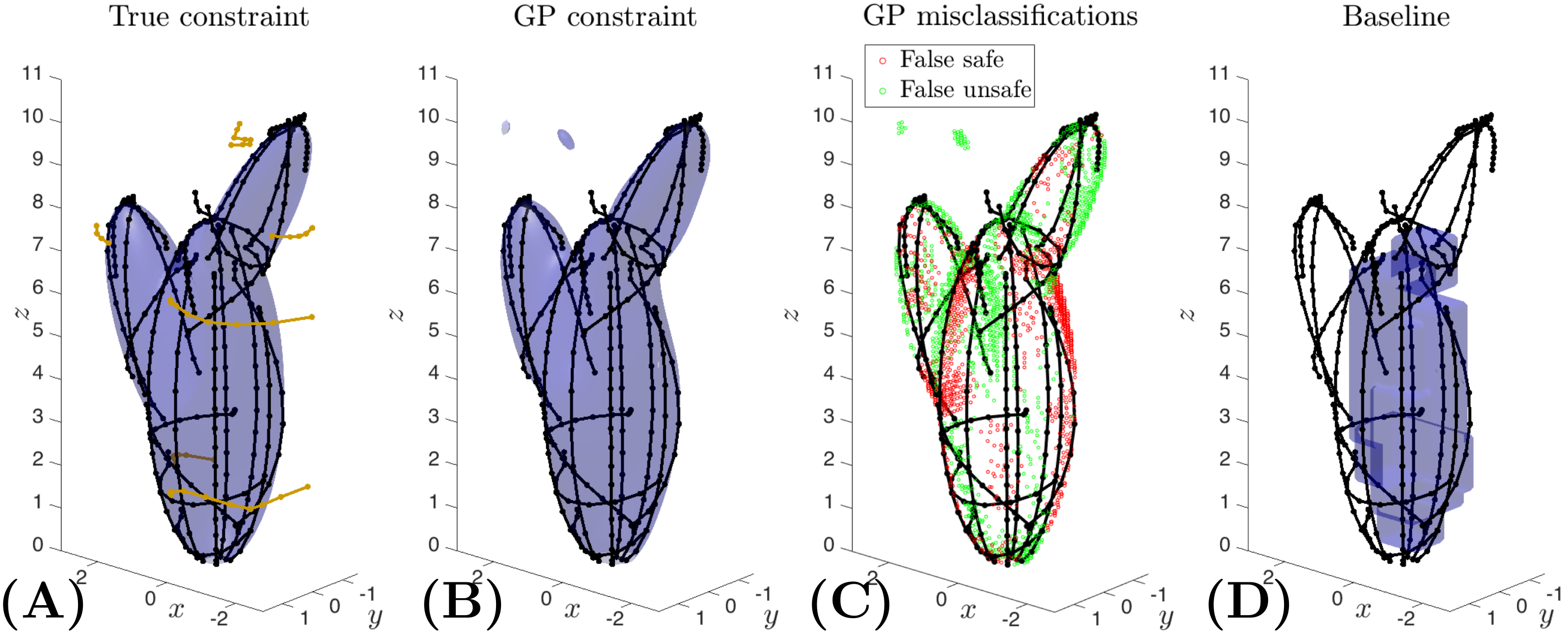}
\caption{Demonstrations (black) avoiding a tree-like obstacle on a 12D quadrotor. \textbf{(A)} True constraint (blue); plans using the GP constraint (gold). \textbf{(B)} Posterior mean of the GP constraint (blue). \textbf{(C)}  Errors of GP posterior mean w.r.t. the true constraint. \textbf{(D)} Constraint learned via \cite{ral} using 6 boxes. \vspace{-22pt}}\label{fig:tree}
\vspace{-3pt}
\end{figure}

We address these issues via the insight that the demonstrations' Karush-Kuhn-Tucker (KKT) optimality conditions provide information on 1) where the unknown constraint is tight on the demonstrations, and 2) the gradient of the constraint (i.e. the surface normal on the constraint boundary) at those points. Crucially, we show that this information can be extracted in a way that is \textit{agnostic} to the chosen constraint representation. This is in sharp contrast to prior work \cite{ral}, which uses the KKT conditions to directly determine a set of constraint parameters, for a fixed constraint parameterization, which make the demonstrations satisfy their KKT conditions. This representation-agnostic constraint information can be incorporated into a flexible non-parametric Gaussian process (GP) function approximator, which enables constraint learning while requiring minimal \textit{a priori} knowledge on the underlying constraint structure. Our contributions are:
\begin{itemize}
     \item We show how to use the demonstrations' KKT conditions to extract information on the values and gradients of the unknown constraint, how to ensure it is robust to the ill-posedness of the constraint learning problem, and how it can be jointly incorporated into a GP.
     \item We show how the uncertainty of the learned GP constraint can be used to plan chance-constrained trajectories which satisfy the unknown constraint with a specified probability, and how the Gaussian structure of the uncertainty can be exploited in the planner to exactly compute trajectory safety probabilities.
     \item We evaluate our method on complex nonlinear constraint learning problems demonstrated on a 5D nonoholonomic car, a 12D quadrotor and a three-link planar arm, showing that our method outperforms baselines.
\end{itemize}

\section{Related Work} 
Our method is related to prior work in IOC \cite{toussaint, boyd, pontryagin} that uses the KKT conditions to learn an unknown cost function, assuming the constraints are known. Other IOC methods \cite{LevinePK11, FinnLA16, WulfmeierRWOP17} use flexible function approximators to learn unknown cost functions without using known features, again assuming known constraints. Our approach is complementary to these methods, as it seeks to learn the constraints.

Our work also builds upon constraint learning methods \cite{ral, wafr, corl, shah}, which often assume a known constraint parameterization to simplify the inverse problem which recovers the unknown constraint \cite{ral, corl20}. When this assumption is removed \cite{corl}, the unsafe set defined by the unknown constraint is assumed to be well-approximated by a finite union of simple unsafe sets, e.g. axis-aligned boxes \cite{corl}. However, the inverse problem scales exponentially with the number of simple sets, as each set adds binary decision variables to the optimization. This renders complex constraints infeasible to learn unless the true constraint representation is known, restricting previous methods (e.g. \cite{ral, corl20}) to learn simple constraints, e.g. unions of a few boxes \cite{ral, corl20}, or to know the parameterization \cite[Fig. 6]{wafr} \cite[Fig. 2]{ral}. Our work is also related to methods that plan using the uncertainty in the learned constraint, e.g. \cite{corl20}. However \cite{corl20} scales exponentially in the number of simple unsafe sets; in contrast, we use the uncertainty of the GP constraint to scale more gracefully.

Finally, our work relates to planning under uncertainty, where the uncertainty may arise from sensing \cite{sampling_sensing_uncertainty}, state estimation \cite{rrbt}, motion \cite{ prob_safe_planning_with_uncertain_motion}, and the environment/obstacles; our work relates most closely to this final case. \cite{prob_approach_path_planing} plans with uncertain obstacles via chance-constrained optimization and requires polytopic obstacles and linear Gaussian dynamics. Under similar assumptions, \cite{ccrrt} embeds chance constraints in a Rapidly-Exploring Random Tree (RRT) \cite{rrt}. In contrast, we assume deterministic dynamics but can handle GP-representable constraints and nonlinear dynamics.

\section{Preliminaries and Problem Statement}

\subsection{Demonstrator's problem and KKT optimality conditions}

We represent a length $T$ demonstration of a task $\task$ performed on a deterministic discrete-time nonlinear system $\state_{t+1} = f(\state_t, \control_t, t)$, $\state\in\statespace \subseteq \mathbb{R}^{n_x}$, $\control\in\controlset \subseteq \mathbb{R}^{n_u}$ as a constrained optimization over state/control trajectories $\trajxu\doteq(\trajx,\traju)\doteq[\state_1, \ldots, \state_T, \control_1,\ldots, \control_{T-1}]$:

\begin{problem}[Forward (demonstrator's) problem / task $\task$]\label{prob:fwd_prob}
\normalfont
\begin{equation*}\label{eq:fwdprob}
	\begin{array}{>{\displaystyle}c >{\displaystyle}l}
		\underset{\trajxu}{\text{minimize}} & \quad c_\task(\trajxu)\\ 
		\text{subject to} & \quad \phi(\trajxu) \in \safeset \subseteq \constraintspace \Leftrightarrow\quad \mathbf{g}_{\neg k}^*(\phi(\trajxu)) \le \mathbf{0}\\
		& \quad \bar\phi(\trajxu) \in \bar\safeset \subseteq \bar\constraintspace, \quad \phi_\task(\trajxu) \in \safeset_\task \subseteq \constraintspace_\task \\
		& \quad \quad \Leftrightarrow \quad \mathbf{h}_k(\trajxu) = \mathbf{0},\quad \mathbf{g}_{k}(\trajxu) \le \mathbf{0}
	\end{array}\hspace{-15pt}
\end{equation*}
\end{problem}

\noindent where $c_\task(\cdot)$ is a known, possibly task-dependent cost function, and $\phi(\cdot)$ maps state/control trajectories to \textit{constraint states} $\cstate$ in \textit{constraint space} $\constraintspace$ (i.e. $\cstate \in \constraintspace$), where the constraint is evaluated. For example, for an obstacle constraint, $\phi(\cdot)$ would select the position components of the states. The safe set $\safeset \subseteq \constraintspace \subseteq \mathbb{R}^{n_c}$ is defined by the unknown inequality constraint $\mathbf{g}_{\neg k}^*(\phi(\trajxu)) \le \mathbf{0}$ and is unknown to the learner. $\bar\phi(\cdot)$ and $\phi_\task(\cdot)$ map to spaces $\bar\constraintspace$ and  $\constraintspace_\task$, containing a known task-shared safe set $\bar \safeset$ and task-dependent safe set $\safeset_\task$, defined by known equality and inequality constraints $\mathbf{h}_k(\trajxu) = \mathbf{0}$, $\mathbf{g}_{k}(\trajxu) \le \mathbf{0}$. We embed the dynamics in $\bar\safeset$ and the start/goal constraints in $\safeset_\task$. We focus on unknown scalar, state-dependent, time-separable inequality constraints

\vspace{-14pt}
\begin{equation}\label{eq:time_sep}
    \hspace{0.5pt}\mathbf{g}_{\neg k}^*(\phi(\trajxu)) \le \mathbf{0}\ \Leftrightarrow\  g_{\neg k}^*(\phisep(x_t)) \le 0,\ \ \forall t = 1, ..., T,\hspace{-5pt}
\end{equation}
\vspace{-14pt}

\noindent where $\phisep: \statespace \mapsto \constraintspace$ is the time-separable counterpart of $\phi(\cdot)$, $g_{\neg k}^*: \constraintspace \mapsto \mathbb{R}$, and $\cstate_t = \phisep(\state_t)$. We note that extending to control-dependent constraints is straightforward. Moreover, we can learn the (un)safe set for an $M$-dimensional vector-valued constraint by learning $g_{\neg k}^*(\cdot) = \max_{i=1,\ldots,M} g_{i, \neg k}^*(\cdot)$.
We assume each demonstration $\traj_\textrm{loc}$ solves Prob. \ref{prob:fwd_prob} to local optimality, satisfying Prob. \ref{prob:fwd_prob}'s KKT conditions \cite{cvxbook}. With Lagrange multipliers $\boldsymbol{\lambda}$, $\boldsymbol{\nu}$, the relevant KKT conditions for the $j$th demonstration $\demj$, denoted $\textrm{KKT}(\demj)$, are:

\vspace{-10pt}\begin{subequations}\label{eq:kkt}
	\small\begin{align}
	\hspace{-25pt}\textrm{Primal feasibility:}\ \  &\textcolor{blue}{\mathbf{g}_{\neg k}^*}(\phi(\demj)) \le \mathbf{0}, \label{eq:kkt_primal3}\\[1pt]
	\hline\hspace{-25pt}\textrm{Lagrange mult.}\quad &\textcolor{blue}{\boldsymbol{\lambda}_k^j} \ge \mathbf{0}\label{eq:kkt_lag1}\\[-3pt]
	\textrm{nonnegativity:}\quad &\textcolor{blue}{\lambda_{\neg k}^{j,t}} \ge 0, \ t = 1,..., T^j\ \Leftrightarrow\ \textcolor{blue}{\boldsymbol{\lambda}_{\neg k}^j} \ge \mathbf{0}\label{eq:kkt_lag2}\\
	\hline\hspace{-3pt}\textrm{Complementary}\quad &\textcolor{blue}{\boldsymbol{\lambda}_{k}^j}\odot\mathbf{g}_{k}(\demj) = \mathbf{0}\label{eq:kkt_comp1}\\[-3pt]
	\textrm{slackness:}\quad &\textcolor{blue}{\boldsymbol{\lambda}_{\neg k}^j}\odot\mathbf{g}_{\neg k}^*(\phi(\demj)) = \mathbf{0}\label{eq:kkt_comp2}\\
	\hline\notag\\[-12pt]
	\hspace{-25pt}\textrm{Stationarity:}\quad &\nabla_{\trajxu} c_\task(\demj) + \textcolor{blue}{\boldsymbol{\lambda}_{k}^{j}}^\top \nabla_{\trajxu} \mathbf{g}_{k}(\demj)\notag
	\\[-2pt]&\quad+\textcolor{blue}{\boldsymbol{\lambda}_{\neg k}^j}^{\hspace{-4pt}\top} \nabla_{\trajxu} \textcolor{blue}{\mathbf{g}_{\neg k}^*}(\phi(\demj))\label{eq:kkt_stat}
	\\[-2pt]&\quad+\textcolor{blue}{\boldsymbol{\nu}_{k}^j}^\top \nabla_{\trajxu} \mathbf{h}_{k}(\demj) = \mathbf{0}\notag
\end{align}
\end{subequations}
\vspace{-13pt}

\noindent where $\odot$ denotes element-wise multiplication. Here, $\boldsymbol{\lambda}_{k}^j \in \mathbb{R}^{N_\ineq^j}$, $\boldsymbol{\nu}_{k}^j \in \mathbb{R}^{N_\eq^j}$, and $\boldsymbol{\lambda}_{\neg k}^j \in \mathbb{R}^{T^j}$ are vectorized Lagrange multipliers for the known inequality, known equality, and unknown inequality constraints for $\demj$, i.e. $\boldsymbol{\lambda}_{\neg k}^j = [\boldsymbol{\lambda}_{\neg k}^{j,1},\ldots \boldsymbol{\lambda}_{\neg k}^{j,T^j}]$. The blue quantities are unknown to the learner. Intuitively, \eqref{eq:kkt_primal3} enforces that $\demj$ is feasible for Prob. \ref{prob:fwd_prob} (it lies in $\safeset$ and satisfies the known constraints), that a multiplier is zero unless its associated constraint is tight \eqref{eq:kkt_lag1}-\eqref{eq:kkt_comp2}, and that its cost cannot be locally improved \eqref{eq:kkt_stat}. 

In previous work \cite{ral, corl20}, the unknown constraint is modeled as $g_{\neg k}^*(z, \theta)$, where $\theta$ are parameters for a known representation of $g_{\neg k}^*$ with a low-order dependence on $\theta$, e.g. linear $g_{\neg k}^*(z, \theta) = \theta^\top g(z)$, where $g(z)$ are known features; the constraint learning problem then reduces to finding $\theta$. In contrast, we do not require a known parameterization for $g_{\neg k}^*(\cdot)$, instead approximating $g_{\neg k}^*(\cdot)$ as a GP to be learned.
\subsection{Overview of Gaussian processes}\label{sec:gp}
A GP is a set of (potentially infinitely many) random variables, any finite number of which have a joint Gaussian distribution \cite{gpml}. It is defined by a mean function $m(x)$ and a covariance function $k(x,x')$. In regression, GPs are often used as the prior distribution for an unknown function $f(x)$ of interest, i.e. $f\sim \mathcal{GP}(m,k)$. Given an input-output dataset $\mathcal{D} = \{(x_n,y_n)\}_{n=1}^{N_\textrm{d}}$, and assuming a noisy observation model $y_{n} \sim \mathcal{N}(f(x_n),\sigma^2)$, the predictive conditional posterior $\tilde{f}|\mathcal{D}$ is also a Gaussian if a GP is used as the prior. In performing inference at a set of points $\{z_m\}_{m=1}^{N_\textrm{q}}$, the posterior mean and covariance on these points are given by 

\vspace{-5pt}
\begin{equation}\label{eq:gp_mean}
    \mathbb{E}[\tilde{f}(\mathbf{Z})|\mathcal{D}] = k(\mathbf{Z},\mathbf{X})(k(\mathbf{X},\mathbf{X}) + \sigma^2\mathbf{\textit{I}})^{-1}\mathbf{Y},
\end{equation}

\vspace{-18pt}
\begin{equation}\label{eq:gp_cov}\footnotesize
    \textrm{cov}(\tilde{f}(\mathbf{Z})|\mathcal{D}) = k(\mathbf{Z},\mathbf{Z}) - k(\mathbf{Z},\mathbf{X})(k(\mathbf{X},\mathbf{X}) + \sigma^2\mathbf{\textit{I}})^{-1}k(\mathbf{X},\mathbf{Z}).
\end{equation}
\vspace{-13pt}

\noindent where $\mathbf{Z}$, $\mathbf{X}$, and $\mathbf{Y}$ are vectors containing all elements in $\{z_m\}_{m=1}^{N_\textrm{q}}$, $\{x_n\}_{n=1}^{N_\textrm{d}}$, and $\{y_n\}_{n=1}^{N_\textrm{d}}$, respectively \cite{gpml}.

\subsection{Problem statement}\label{sec:problem_statement}

Given locally-optimal demonstrations $\{\demj\}_{j=1}^{\numsafe}$, we want an estimate $g_{\neg k}(\cdot)$ of the unknown constraint $g_{\neg k}^*(\cdot)$, defining
\begin{equation}\label{eq:safeset}
    \safeset = \{\phisep(\state) \mid g_{\neg k}(\phisep(\state)) \le 0\} = \{\cstate \mid g_{\neg k}(\cstate) \le 0\}
\end{equation}
 \noindent as a safe set that is consistent with the demonstrations' KKT conditions, where the true constraint $g_{\neg k}^*(\cdot)$ is assumed to be a sample from $\mathcal{GP}(m,k)$. Moreover, we wish to use the learned constraint to plan trajectories $\trajxu^\plan$ that connect novel start/goal states while satisfying $g_{\neg k}^*(\cdot)$ with at least some specified probability $1-\safeprob$, i.e. $\Prob(\mathbf{g}_{\neg k}^*(\phi(\trajxu^\plan)) \le \mathbf{0}) \ge 1-\delta$.
 
\section{Method}

Our method determines where the unknown constraint is tight and its gradient at those points from the KKT conditions (Sec. \ref{sec:method_kkt}), uses this information to train a GP representation of the constraint (Sec. \ref{sec:method_gp}), and plans novel probabilistically-safe trajectories using the learned constraint (Sec. \ref{sec:method_planning}). We overview the flow of our method in Fig. \ref{fig:flow}.

\begin{figure}
\centering
\includegraphics[width=\linewidth]{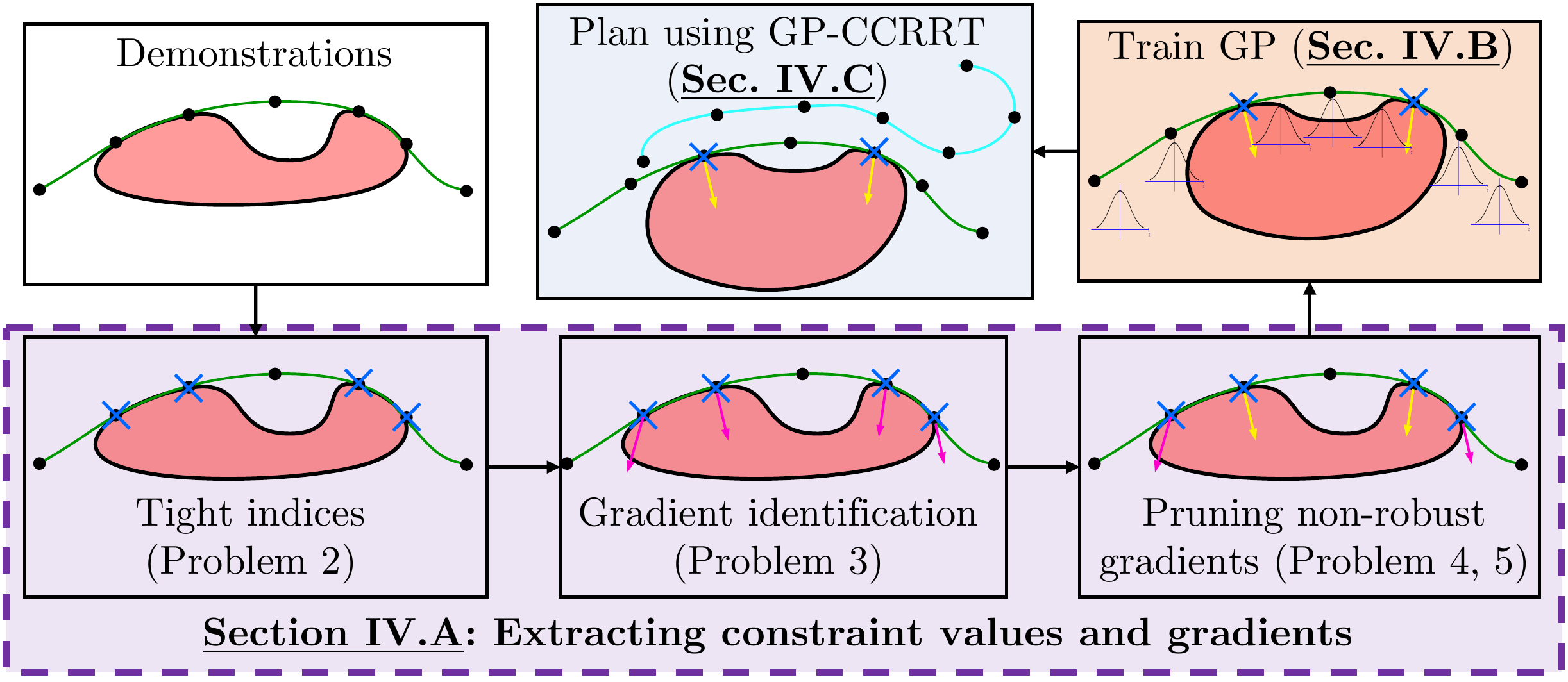}
\caption{Method flow. Given a set of locally-optimal demonstrations, we first find consistent constraint values and gradients (Sec. \ref{sec:method_kkt}), then use this data to train a consistent GP constraint representation (Sec. \ref{sec:method_gp}), and then finally plan probabilistically-safe trajectories using the learned GP constraint (Sec. \ref{sec:method_planning}).}\label{fig:flow}
\end{figure}

\subsection{Obtaining constraint value and gradient information}\label{sec:method_kkt}

For a locally-optimal demonstration, the KKT conditions \eqref{eq:kkt} provide information on the following:
\begin{enumerate}[A)]
    \item If/when the unknown constraint $g_{\neg k}^*(\cdot)$ is \textit{tight} (i.e. at which time steps of the demonstration  $g_{\neg k}^*(\phisep(\state_t)) = 0$) via complementary slackness \eqref{eq:kkt_comp2} and stationarity \eqref{eq:kkt_stat}.
    \item How the constraint changes locally around these tight demonstration points, in the form of the constraint gradient at that point, $\nabla_{\state_t} g_{\neg k}^*(\phisep(\state_t))$, via stationarity \eqref{eq:kkt_stat}. 
\end{enumerate}
Combining both sources of information is crucial in recovering an accurate constraint that is KKT-consistent.

\subsubsection{Constraint value information} We first describe a method for inferring when the unknown constraint $g_{\neg k}^*(\cdot)$ is tight. As shorthand, let the stationarity residual $$\stat^j(\boldsymbol{\lambda}_{k}^{j}, \boldsymbol{\lambda}_{\neg k}^{j}, \boldsymbol{\nu}_{k}^{j}) = \begin{bmatrix}\stat^j_{\state_1}(\boldsymbol{\lambda}_{k}^{j}, \boldsymbol{\lambda}_{\neg k}^{j}, \boldsymbol{\nu}_{k}^{j}), \allowbreak \\ \vdots \\ \stat^j_{\state_T}(\boldsymbol{\lambda}_{k}^{j}, \boldsymbol{\lambda}_{\neg k}^{j}, \boldsymbol{\nu}_{k}^{j}) \\ \stat^j_{\control_1}(\boldsymbol{\lambda}_{k}^{j} \boldsymbol{\lambda}_{\neg k}^{j}, \boldsymbol{\nu}_{k}^{j}) \\ \vdots \\ \stat^j_{\control_t}(\boldsymbol{\lambda}_{k}^{j} \boldsymbol{\lambda}_{\neg k}^{j}, \boldsymbol{\nu}_{k}^{j})\end{bmatrix} \in \mathbb{R}^{|\xi_{xu}|}$$ be the LHS of the stationarity condition \eqref{eq:kkt_stat} for the $j$th demonstration $\demj$, where $\stat^j_{\state_t / \control_t}(\boldsymbol{\lambda}_{k}^{j}, \boldsymbol{\lambda}_{\neg k}^{j}, \boldsymbol{\nu}_{k}^{j}) \in \mathbb{R}^{n_x} / \mathbb{R}^{n_u}$ is the sub-vector containing the residual terms for $\state_t$ / $\control_t$.
Recall that complementary slackness \eqref{eq:kkt_comp2} enforces that at each timestep, the Lagrange multiplier for the the unknown constraint is zero unless the constraint is tight. Moreover, as any locally-optimal trajectory $\traj_\textrm{loc}$ must satisfy the stationarity condition \eqref{eq:kkt_stat}, we can determine that the unknown constraint $g_{\neg k}^*(\cdot)$ must be tight on $\demj$ at timestep $t$ if we cannot force the norm of the stationarity residual at that timestep to be zero, i.e. $\Vert \stat^j_{\state_t}\Vert > 0$, while also enforcing that $g_{\neg k}^*(\phisep(\state_t))$ is not tight (cf. Fig. \ref{fig:label_shape}.A for intuition) and that the KKT conditions for the known constraints are satisfied. This is achieved by solving Prob. \ref{prob:tight} -- a rapidly-solvable linear program (LP):
\begin{problem}[Tightness check at time $t$ on demonstration $j$]\label{prob:tight}\normalfont
\begin{equation*}
	\hspace{-6pt}\begin{array}{>{\displaystyle}c >{\displaystyle}l >{\displaystyle}l}
				&\\[-15pt]
		\underset{\boldsymbol{\lambda}_{k}^j, \boldsymbol{\nu}_{k}^j}{\text{minimize}} & \big \Vert \stat^j_{\state_t}(\boldsymbol{\lambda}_{k}^{j}, \mathbf{0}, \boldsymbol{\nu}_{k}^{j}) \Vert_1 \\
		\text{subject to} & \eqref{eq:kkt_lag1}, \eqref{eq:kkt_comp1},\\[2pt]
	\end{array}\hspace{-15pt}
\end{equation*}
\end{problem}

\noindent where the effect of the unknown inequality constraint on the residual is erased by zeroing out its corresponding Lagrange multipliers $\boldsymbol{\lambda}_{\neg k}^j$. Then, the following result holds:
\begin{corollary}\normalfont\label{thm:tightness}
If the optimal value of Prob. \ref{prob:tight}, denoted $p_2^{t,j,*}$, is greater than $0$, then the true constraint is tight: $g_{\neg k}^*(\cstate_t^j) = 0$.
\end{corollary}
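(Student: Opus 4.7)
The plan is to prove the contrapositive: assume $g_{\neg k}^*(\cstate_t^j) < 0$ (the true unknown constraint is strictly inactive at time $t$ on $\demj$) and show that $p_2^{t,j,*} = 0$. Since $\demj$ is locally optimal for Prob. \ref{prob:fwd_prob}, there exist true Lagrange multipliers $(\boldsymbol{\lambda}_{k}^{j,*}, \boldsymbol{\lambda}_{\neg k}^{j,*}, \boldsymbol{\nu}_{k}^{j,*})$ satisfying every condition in \eqref{eq:kkt}. In particular, these multipliers already satisfy \eqref{eq:kkt_lag1} and \eqref{eq:kkt_comp1}, so $(\boldsymbol{\lambda}_{k}^{j,*}, \boldsymbol{\nu}_{k}^{j,*})$ is feasible for Prob. \ref{prob:tight}.

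Next I would invoke complementary slackness \eqref{eq:kkt_comp2} on the unknown constraint at time $t$: the assumption $g_{\neg k}^*(\cstate_t^j) < 0$ forces $\lambda_{\neg k}^{j,t,*} = 0$. The key structural observation is time-separability \eqref{eq:time_sep}: since $g_{\neg k}^*(\phisep(\state_{t'}))$ depends only on $\state_{t'}$, the only entry of $\boldsymbol{\lambda}_{\neg k}^j$ whose associated stationarity term carries a nonzero gradient with respect to $\state_t$ is $\lambda_{\neg k}^{j,t}$. Hence the contribution of $\boldsymbol{\lambda}_{\neg k}^{j,*}$ to the $\state_t$-sub-block of \eqref{eq:kkt_stat} collapses to $\lambda_{\neg k}^{j,t,*}\nabla_{\state_t} g_{\neg k}^*(\phisep(\state_t^j)) = \mathbf{0}$, and we may conclude
\begin{equation*}
\stat^j_{\state_t}(\boldsymbol{\lambda}_{k}^{j,*}, \mathbf{0}, \boldsymbol{\nu}_{k}^{j,*}) = \stat^j_{\state_t}(\boldsymbol{\lambda}_{k}^{j,*}, \boldsymbol{\lambda}_{\neg k}^{j,*}, \boldsymbol{\nu}_{k}^{j,*}) = \mathbf{0},
\end{equation*}
where the second equality is the $\state_t$-sub-block of the full stationarity condition \eqref{eq:kkt_stat}. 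Thus $(\boldsymbol{\lambda}_{k}^{j,*}, \boldsymbol{\nu}_{k}^{j,*})$ attains objective value $0$ in Prob. \ref{prob:tight}, so $p_2^{t,j,*} \le 0$; combined with the nonnegativity of the $\ell_1$ norm this gives $p_2^{t,j,*} = 0$. Contrapositively, $p_2^{t,j,*} > 0$ forces $g_{\neg k}^*(\cstate_t^j) = 0$, which is the claim.

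The main subtlety is the time-separability reduction: without \eqref{eq:time_sep}, zeroing the entire vector $\boldsymbol{\lambda}_{\neg k}^j$ rather than only its $t$-th entry could in principle alter $\stat^j_{\state_t}$, because a non-separable unknown constraint might couple multipliers at other timesteps into the $\state_t$-sub-block. Since time-separability is an explicit modeling assumption, this obstacle is handled structurally rather than by any delicate analytic bound. A minor bookkeeping point is to confirm that Prob. \ref{prob:tight} is genuinely an LP in $(\boldsymbol{\lambda}_{k}^{j}, \boldsymbol{\nu}_{k}^{j})$: the residual and the constraints of \eqref{eq:kkt_lag1}, \eqref{eq:kkt_comp1} are linear in the multipliers once $\demj$ is fixed, and the $\ell_1$ objective linearizes via the standard auxiliary-variable trick.
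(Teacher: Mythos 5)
Your proof is correct and follows essentially the same route as the paper's: the paper argues by contradiction that if $g_{\neg k}^*(\cstate_t^j) < 0$, complementary slackness and stationarity of the locally-optimal demonstration yield known-constraint multipliers achieving $p_2^{t,j,*} = 0$, which is exactly your contrapositive. Your version simply makes explicit the time-separability bookkeeping (that only $\lambda_{\neg k}^{j,t}$ enters the $\state_t$-sub-block of the residual, so zeroing the whole vector $\boldsymbol{\lambda}_{\neg k}^j$ is harmless) that the paper leaves implicit.
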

\begin{proof}\normalfont
Suppose for contradiction that $g_{\neg k}^*(\cstate_t^j) < 0$. Then, since $\demj$ satisfies \eqref{eq:kkt}, $g_{\neg k}^*(\cstate_t^j) < 0$ implies via \eqref{eq:kkt_comp2}-\eqref{eq:kkt_stat} that there exists $\boldsymbol{\lambda}_{k}^j$, $\boldsymbol{\lambda}_{\neg k}^j = \mathbf{0}$, $\boldsymbol{\nu}_{k}^j$ such that $p_2^{t,j,*} = 0$. However, by the theorem statement, $p_2^{t,j,*} > 0$. Contradiction.
\end{proof}
\noindent By solving Prob. \ref{prob:tight} and checking if $p_2^{t,j,*} > 0$ for $t = 1,\ldots,T^j$, we can find a set of timesteps where $g_{\neg k}^*(\cstate_t^j) = 0$; call these identified tight timesteps $\ttight^j$. Intuitively, Prob. \ref{prob:tight} checks if we can ensure $g_{\neg k}^*(\cstate_t^j) = 0$ despite the known constraints, e.g. dynamics, control constraints, which may be simultaneously tight. By solving Prob. \ref{prob:tight} $\sum_{j=1}^{N_\textrm{dem}} T^j$ times (once for each timestep), we can check tightness over the entire dataset. We close with two important remarks. First, complementary slackness and stationarity do not provide information on $g_{\neg k}(\cstate_t^j)$ for timesteps $\tnontight^j \doteq \{1,\ldots,T^j\} \setminus \ttight^j$; we can only deduce using primal feasibility that $g_{\neg k}^*(\cstate_t^j) \le 0$ for $t \in \tnontight^j$. Second, the estimated set of tight timesteps $\ttight$ may only be a subset of the true set of tight timesteps, i.e. $\ttight^j \subseteq \{ t \mid g_{\neg k}^*(\cstate_t^j) = 0\}$; this is because the system may lie on the constraint boundary but the cost cannot be improved by crossing it (Fig. \ref{fig:label_shape}.C), i.e. there are multipliers such that $\Vert \stat^j_{\state_t}(\boldsymbol{\lambda}_{k}^{j}, \mathbf{0}, \boldsymbol{\nu}_{k}^{j}) \Vert_1 = 0$ despite $\boldsymbol{\lambda}_{\neg k}^{j,t} = 0$; KKT cannot guarantee that these points are tight.

\begin{figure*}
\centering
\includegraphics[width=0.9\linewidth]{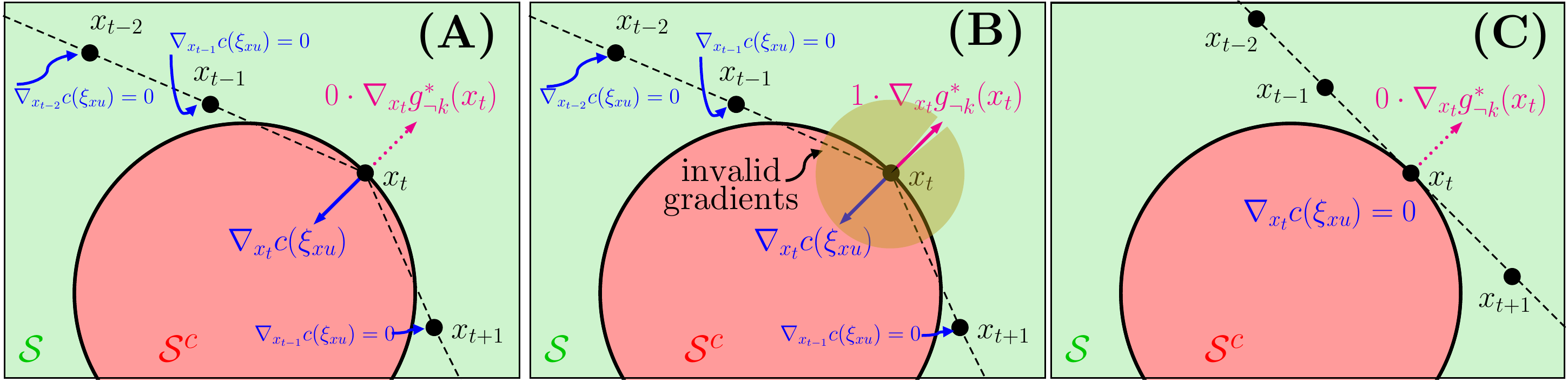}
\caption{Consider a demonstrator minimizing path length on a kinematic system; $\phisep(\state_t) = \state_t \in \mathbb{R}^2$. In this simplified setting, we can interpret \eqref{eq:kkt_stat} as balancing between vectors $\nabla c$ and $\lambda \nabla g_{\neg k}$; if they cancel to \textbf{0}, stationarity holds. We visualize this for Prob. \ref{prob:tight}-\ref{prob:label}. \textbf{(A)} Prob. \ref{prob:tight}: $\Vert\stat^{\state_t}\Vert$ can only go to zero if $\boldsymbol{\lambda}_{\neg k}^t > 0$; thus, we detect $g_{\neg k}^*(\state_t) = 0$. \textbf{(B)} Prob. \ref{prob:label}: only a scaling of the magenta constraint normal can make $\Vert \stat^{\state_t} \Vert = 0$; all gradients in gold are are not anti-parallel to $\nabla c$ and cannot cancel it. \textbf{(C)}: sometimes if $g_{\neg k}^*(\state_t) = 0$, it is still possible for $\Vert \stat^{\state_t}\Vert = 0$ with $\boldsymbol{\lambda}_{\neg k}^t = 0$.\vspace{-5pt}}
\label{fig:label_shape}
\end{figure*}

\subsubsection{Gradient value information} Next, we obtain a set of KKT-consistent gradients of the unknown constraint at each identified tight timestep $t \in \ttight^j$. In Prob. \ref{prob:label}, we set the Lagrange mutipliers $\boldsymbol{\lambda}_{\neg k}^{j,t} = 1$, for all $t \in \ttight^j$, and set the non-tight Lagrange multipliers as $\boldsymbol{\lambda}_{\neg k}^{j,t} = 0$, for all $t \in \tnontight^j$; denote the concatenation of the multipliers as $\mathbf{1}_\textrm{tight}(\demj)$. We then solve for gradients $\nabla_{\state_t} g_{\neg k}(\phisep(\state_t^j))$, for all $t \in \ttight^j$, which make the demonstration KKT-consistent along with the Lagrange multipliers of the known constraints:
\begin{problem}[Gradient identification on demonstration $j$]\label{prob:label}\normalfont
\begin{equation*}
	\hspace{-6pt}\begin{array}{>{\displaystyle}c >{\displaystyle}l >{\displaystyle}l}
				&\\[-16pt]
		\text{find} & \boldsymbol{\lambda}_{k}^j, \boldsymbol{\nu}_{k}^j,\nabla_{\state_t} g_{\neg k}(\phisep(\state_t^j)), \forall t \in \ttight^j \\
		\text{subject to} & \eqref{eq:kkt_primal3}, \eqref{eq:kkt_lag1}, \eqref{eq:kkt_comp1}\\[2pt]
		& \stat^j(\boldsymbol{\lambda}_{k}^{j}, \mathbf{1}_\textrm{tight}(\demj), \boldsymbol{\nu}_{k}^{j}) = \mathbf{0}.
	\end{array}\hspace{-15pt}
\end{equation*}
\end{problem}

\noindent Prob. \ref{prob:label} remains an LP as we fix $\boldsymbol{\lambda}_{\neg k}^j$ to avoid bilinearity. To show this does not overly restrict the set of KKT-consistent gradients, we show that while the true gradient may be not be feasible for Prob. \ref{prob:label}, a \textit{positive scaling} of it will be. A scaled gradient is acceptable for two reasons. First, it can be impossible to uniquely identify an unscaled gradient via KKT alone: by letting the tight multipliers $\boldsymbol{\lambda}_{\neg k}^{j,t}$, $t \in \ttight^j$ take positive non-unit values, they can scale their values to satisfy KKT for different scalings of $\nabla_{\state_t} g_{\neg k}(\phisep(\state_t^j))$. Second, while a scaled gradient affects how quickly the constraint changes away from the tight point, it does not affect the shape of the constraint (i.e. it does not rotate the unit surface normal vector at the boundary of the unsafe set). Let $\mathcal{F}$ be the feasible set of Prob. \ref{prob:label} and $\textrm{proj}_{\nabla g_{\neg k}}(\mathcal{F}) \doteq \{\nabla g_{\neg k} \mid \exists (\boldsymbol{\lambda}_k, \boldsymbol{\nu}_k, \nabla g_{\neg k}) \in \mathcal{F}\}$. Then, we have the following result:
\begin{theorem}\normalfont\label{thm:scaling}
A positive scaling of the true constraint gradient $\alpha_t^j \nabla_{\state_t}g_{\neg k}^*(\phisep(\state_t^j))$, for $\alpha_t^j > 0$, for all $t \in \ttight^j$, is contained in $\textrm{proj}_{\nabla g_{\neg k}}(\mathcal{F})$.
\end{theorem}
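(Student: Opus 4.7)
\emph{Proof plan.} My approach is to exhibit an explicit feasible point of Prob.~\ref{prob:label} whose $\nabla g_{\neg k}$ coordinates are positive scalings of the true gradients. Since $\demj$ solves Prob.~\ref{prob:fwd_prob} to local optimality, there exist true KKT multipliers $(\boldsymbol{\lambda}_k^{j,\star},\boldsymbol{\lambda}_{\neg k}^{j,\star},\boldsymbol{\nu}_k^{j,\star})$ satisfying \eqref{eq:kkt} with the true constraint $g_{\neg k}^*$ and its gradients; these multipliers serve as the natural building blocks.

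The first key step is to show that $\lambda_{\neg k}^{j,t,\star} > 0$ for every $t\in\ttight^j$. I would argue by contradiction using Corollary~\ref{thm:tightness}: if $\lambda_{\neg k}^{j,t,\star}=0$, then, because the time-separable structure \eqref{eq:time_sep} means $\nabla_{\state_t} g_{\neg k}^*(\phisep(\state_{t'}^j))=\mathbf{0}$ for $t'\ne t$, the true stationarity at $\state_t$ collapses to $\stat^j_{\state_t}(\boldsymbol{\lambda}_k^{j,\star},\mathbf{0},\boldsymbol{\nu}_k^{j,\star})=\mathbf{0}$, so $(\boldsymbol{\lambda}_k^{j,\star},\boldsymbol{\nu}_k^{j,\star})$ attains objective value $0$ in Prob.~\ref{prob:tight}, contradicting $p_2^{t,j,\star}>0$. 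I then set $\alpha_t^j := \lambda_{\neg k}^{j,t,\star}>0$ for each $t\in\ttight^j$.

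Next, I propose the candidate $(\boldsymbol{\lambda}_k^j,\boldsymbol{\nu}_k^j):=(\boldsymbol{\lambda}_k^{j,\star},\boldsymbol{\nu}_k^{j,\star})$ together with $\nabla_{\state_t} g_{\neg k}(\phisep(\state_t^j)) := \alpha_t^j\,\nabla_{\state_t} g_{\neg k}^*(\phisep(\state_t^j))$ for $t\in\ttight^j$, and verify each constraint of Prob.~\ref{prob:label} in turn. Primal feasibility \eqref{eq:kkt_primal3}, nonnegativity \eqref{eq:kkt_lag1}, and complementary slackness \eqref{eq:kkt_comp1} are inherited directly from true KKT. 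For the stationarity constraint, substituting $\boldsymbol{\lambda}_{\neg k}^j=\mathbf{1}_\textrm{tight}(\demj)$ and the scaled gradients into the per-timestep stationarity at any $t\in\ttight^j$ reproduces the true equation exactly, since $1\cdot\alpha_t^j = \lambda_{\neg k}^{j,t,\star}$.

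The main obstacle is the per-timestep stationarity at $t\in\tnontight^j$: the candidate omits the true contribution $\lambda_{\neg k}^{j,t,\star}\,\nabla_{\state_t} g_{\neg k}^*(\phisep(\state_t^j))$, so I need $\lambda_{\neg k}^{j,t,\star}=0$ for all such $t$. Strict complementary slackness handles the generic case $g_{\neg k}^*(\cstate_t^j)<0$ directly via \eqref{eq:kkt_comp2}. For the pathological case~(C) of Fig.~\ref{fig:label_shape} (tight but undetected), one must exploit the non-uniqueness of the KKT multiplier set: $p_2^{t,j,\star}=0$ certifies that the known-constraint Jacobians already span the direction of $\nabla_{\state_t} g_{\neg k}^*$ at $\state_t$, so some KKT configuration with $\lambda_{\neg k}^{j,t,\star}=0$ exists locally, and a bookkeeping argument is needed to stitch these local choices into a globally consistent multiplier vector. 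This step is the most delicate and is where any hidden regularity assumption (e.g.\ LICQ on Prob.~\ref{prob:fwd_prob}) would be doing the real work.
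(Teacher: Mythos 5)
Your proposal is essentially the paper's own proof: both arguments exhibit an explicit feasible point of Prob.~\ref{prob:label} built from a KKT-consistent multiplier set of $\demj$, take $\nabla_{\state_t} g_{\neg k}(\phisep(\state_t^j)) = \boldsymbol{\lambda}_{\neg k}^{j,*,t}\,\nabla_{\state_t} g_{\neg k}^*(\phisep(\state_t^j))$ at each $t\in\ttight^j$, and conclude by setting $\alpha_t^j = \boldsymbol{\lambda}_{\neg k}^{j,*,t}$. The two points you single out are exactly where the paper is terser than you are---it never spells out why $\alpha_t^j>0$ (your contradiction via Corollary~\ref{thm:tightness} is the intended argument), and it dispatches your ``delicate'' stitching step at undetected-tight timesteps by simply asserting, ``via Prob.~\ref{prob:tight},'' that a single KKT-consistent multiplier set with $\boldsymbol{\lambda}_{\neg k}^{j,t}=0$ for all $t\in\tnontight^j$ exists---so your write-up matches the paper's route while being, if anything, the more explicit of the two.
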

\begin{proof}
Since $\demj$ is locally-optimal, it satisfies its KKT conditions; i.e. there exists $\boldsymbol{\lambda}_k^j \ge \mathbf{0}$, $\boldsymbol{\nu}_k^j$, and $\boldsymbol{\lambda}_{\neg k}^j \ge \mathbf{0}$, where $\boldsymbol{\lambda}_{\neg k}^{j,t} = 0$, for all $t \in \tnontight^j$. This is via Prob. \ref{prob:tight}: if $t\in\tnontight^j$, the KKT conditions can be satisfied if $\boldsymbol{\lambda}_{\neg k}^{j,t} = 0$. Denote one such KKT-consistent set of multipliers as $\boldsymbol{\lambda}_k^{j,*}$, $\boldsymbol{\nu}_k^{j,*}$, and $\boldsymbol{\lambda}_{\neg k}^{j,*}$. As $g_{\neg k}^*(\cdot)$ is state-dependent and time-separable, we can write $\boldsymbol{\lambda}_{\neg k}^{j^\top} \nabla_{\trajxu} \mathbf{g}_{\neg k}^*(\phi(\demj)) = [\boldsymbol{\lambda}_{\neg k}^{j,1}\nabla_{\state_1} g_{\neg k}^*(\phisep(\state_1^j)), \ldots, \allowbreak\boldsymbol{\lambda}_{\neg k}^{j,T}\nabla_{\state_T} g_{\neg k}^*(\phisep(\state_T^j)), \allowbreak \mathbf{0}_{1\times n_u(T^j-1)}]$. Then, a feasible solution for Prob. \ref{prob:label} is $\boldsymbol{\lambda}_k^j = \boldsymbol{\lambda}_k^{j,*}$, $\boldsymbol{\nu}_k^j = \boldsymbol{\nu}_k^{j,*}$, and $\nabla_{\state_t} g_{\neg k}(\phisep(\state_t^j)) = \boldsymbol{\lambda}_{\neg k}^{j,*,t} \nabla_{\state_t} g_{\neg k}^*(\phisep(\state_t^j))$, for all $t \in \ttight^j$. Thus, the theorem holds by setting $\alpha_t^j = \boldsymbol{\lambda}_{\neg k}^{j,*,t}$.
\end{proof}

 While fixing $\boldsymbol{\lambda}_{\neg k}^j$ restricts $\textrm{proj}_{\nabla g_{\neg k}}(\mathcal{F})$ and reduces scaling ambiguity, due to other active constraints, these recovered KKT-consistent gradients may still not be unique. While scaled gradients are tolerable, a rotation of the true gradient can also lie in $\textrm{proj}_{\nabla g_{\neg k}}(\mathcal{F})$, complicating the learning as: 1) the unsafe set shape becomes uncertain, 2) modeling gradient uncertainty is challenging, as determining the set of all consistent gradient vectors is computationally intensive \cite{corl20}, and 3) the gradient uncertainty cannot be well-modeled by a Gaussian distribution, as required by our GP representation. 

Though quantifying the uncertainty in the constraint gradients is challenging, we can efficiently check if a given KKT-consistent normal vector is unique, modulo a positive scaling. This can be done by checking that there does not exist another KKT-consistent normal vector that either A) lies in the orthogonal complement of the given normal vector or B) points in directly the opposite direction (see Fig. \ref{fig:uniqueness_check}). Let $\nabla_{\state_t} \tilde g_{\neg k}^j$ be the gradient returned by Prob. \ref{prob:label} for timestep $t$ on $\demj$ and $\nabla_{\state_t} \tilde g_{\neg k}^{j,\perp} \in \mathbb{R}^{n_c \times (n_c - 1)}$ as a basis for its orthogonal complement. Then, condition A) can be checked by solving:

\begin{problem}[Orthogonal check at time $t$ on demonstration $j$]\label{prob:orthogonal}\normalfont

\begin{equation*}
	\hspace{-6pt}\begin{array}{>{\displaystyle}c >{\displaystyle}l >{\displaystyle}l}
				&\\[-18pt]
		\underset{\boldsymbol{\lambda}_{k}^j, \boldsymbol{\nu}_{k}^j,\nabla_{\state_t} g_{\neg k}(\phisep(\state_t^j))}{\text{maximize}} & \big \Vert \nabla_{\state_t} g_{\neg k}(\phisep(\state_t^j))^\top \nabla_{\state_t} \tilde g_{\neg k}^{j,\perp} \Vert_1 \\
		\text{subject to} & \eqref{eq:kkt_primal3}, \eqref{eq:kkt_lag1}, \eqref{eq:kkt_comp1}\\[2pt]
		& \stat^j(\boldsymbol{\lambda}_{k}^{j}, \mathbf{1}_\textrm{tight}(\demj), \boldsymbol{\nu}_{k}^{j}) = \mathbf{0}.
	\end{array}\hspace{-15pt}
\end{equation*}
\end{problem}

\noindent Intuitively, Prob. \ref{prob:orthogonal} searches for an alternate gradient in the orthogonal complement of the gradient obtained via Prob. \ref{prob:label} such that some assignment of multipliers also exists to satisfy the KKT conditions. Due to the non-convex objective, Prob. \ref{prob:orthogonal} can be modeled as a mixed integer linear program (MILP) with only a small number of binary variables, thus remaining rapidly-solvable. Next, condition B) can be checked via:
\begin{problem}[Anti-parallel check; time $t$ on demonstration $j$]\label{prob:opposite}\normalfont
\begin{equation*}
	\hspace{-6pt}\begin{array}{>{\displaystyle}c >{\displaystyle}l >{\displaystyle}l}
				&\\[-18pt]
		\underset{\boldsymbol{\lambda}_{k}^j, \boldsymbol{\nu}_{k}^j,\nabla_{\state_t} g_{\neg k}(\phisep(\state_t^j))}{\text{minimize}} & \nabla_{\state_t} g_{\neg k}(\phisep(\state_t^j))^\top \nabla_{\state_t} \tilde g_{\neg k}^j \\
		\text{subject to} & \eqref{eq:kkt_primal3}, \eqref{eq:kkt_lag1}, \eqref{eq:kkt_comp1}\\[2pt]
		& \stat^j(\boldsymbol{\lambda}_{k}^{j}, \mathbf{1}_\textrm{tight}(\demj), \boldsymbol{\nu}_{k}^{j}) = \mathbf{0}.
	\end{array}\hspace{-15pt}
\end{equation*}
\end{problem}
\noindent Prob. \ref{prob:opposite}, an LP, searches for a KKT-consistent gradient minimizing the dot product with $\nabla_{\state_t} \tilde g_{\neg k}^j$, i.e. pointing as anti-parallel to the original gradient as possible. Thm. \ref{thm:uniqueness} shows how Probs. \ref{prob:orthogonal}-\ref{prob:opposite} can check gradient uniqueness. Denote the optimal values of Prob. \ref{prob:orthogonal} and \ref{prob:opposite} as $p_4^*$ and $p_5^*$. We have:

\begin{figure}
\centering
\includegraphics[width=\linewidth]{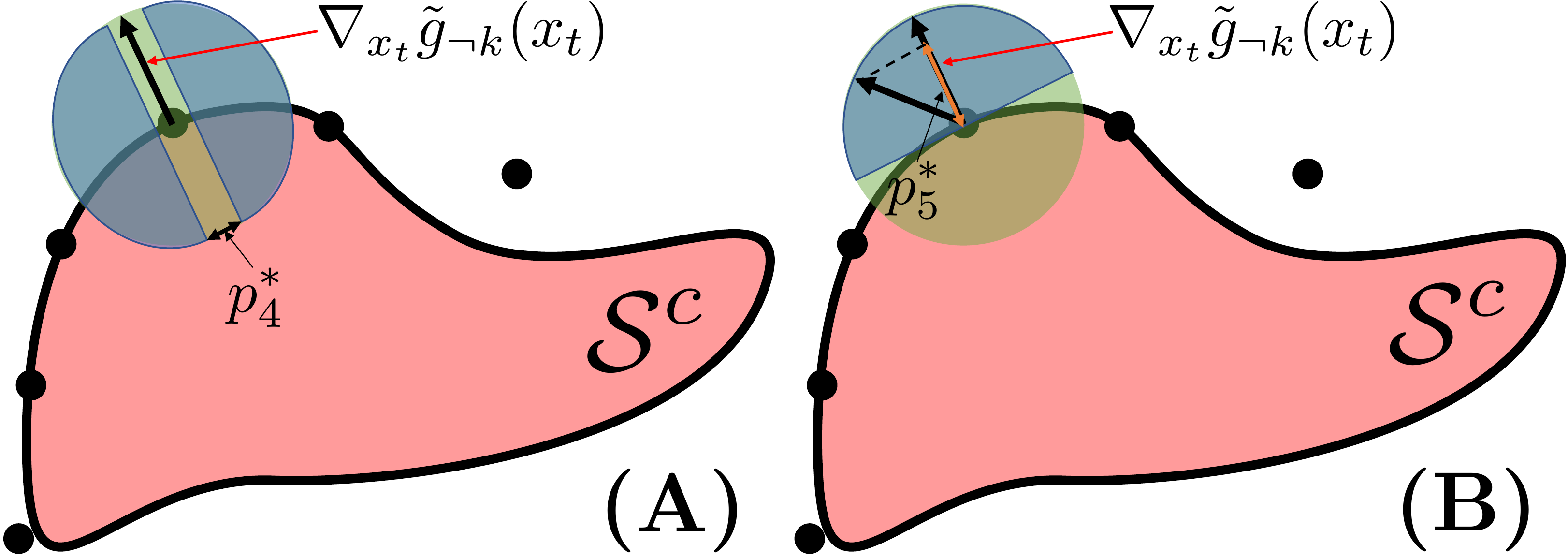}
\caption{Prob. \ref{prob:orthogonal} and \ref{prob:opposite} intuition. \textbf{(A)}: Prob. \ref{prob:orthogonal} searches for a new gradient orthogonal to the original gradient by maximizing the distance from the origin as measured in the coordinates of $\nabla_{\state_t} \tilde g_{\neg k}^\perp$. If $p_4^* = 0$ (i.e. the gradient remains in the gap between the blue areas as the gap $\rightarrow 0$), the new gradient must remain in the span of the original gradient. \textbf{(B)}: Prob. \ref{prob:opposite} searches for a new gradient with minimal dot product w.r.t. the original gradient; if the result remains in the blue semicircle (i.e. $p_5^* > 0$) and $p_4^* = 0$, the gradient from Prob. \ref{prob:label} is unique up to a scaling.}\label{fig:uniqueness_check}
\end{figure}

\begin{theorem}\normalfont\label{thm:uniqueness}
If $p_4^* = 0$ and $p_5^* > 0$, the true gradient $\nabla_{\state_t} g_{\neg k}^*(\phisep(\state_t^j))$ is a positive scaling of the recovered gradient $\nabla_{\state_t} \tilde g_{\neg k}(\phisep(\state_t^j))$, i.e. there exists $\alpha > 0$ such that $\nabla_{\state_t} g_{\neg k}^*(\phisep(\state_t^j)) = \alpha \nabla_{\state_t} \tilde g_{\neg k}(\phisep(\state_t^j))$.
\end{theorem}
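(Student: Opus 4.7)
The plan is to plant a positive multiple of the true gradient inside $\textrm{proj}_{\nabla g_{\neg k}}(\mathcal{F})$ via Thm.~\ref{thm:scaling}, and then use the optimality values $p_4^*$ and $p_5^*$ to localize every feasible gradient to the positive ray spanned by the recovered gradient $\nabla_{\state_t} \tilde g_{\neg k}(\phisep(\state_t^j))$. By Thm.~\ref{thm:scaling} there exists $\alpha > 0$ with $\alpha \nabla_{\state_t} g_{\neg k}^*(\phisep(\state_t^j)) \in \textrm{proj}_{\nabla g_{\neg k}}(\mathcal{F})$; because Prob.~\ref{prob:orthogonal} and Prob.~\ref{prob:opposite} differ from Prob.~\ref{prob:label} only in their objective (the constraints on $\boldsymbol{\lambda}_k^j$, $\boldsymbol{\nu}_k^j$, and $\nabla_{\state_t} g_{\neg k}$ are identical), this planted point is feasible for all three problems.

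Next, I would exploit $p_4^* = 0$ to pin down the direction. Since Prob.~\ref{prob:orthogonal} \emph{maximizes} the nonnegative quantity $\|\nabla_{\state_t} g_{\neg k}(\phisep(\state_t^j))^\top \nabla_{\state_t} \tilde g_{\neg k}^{j,\perp}\|_1$, the equality $p_4^* = 0$ forces $\nabla_{\state_t} g_{\neg k}(\phisep(\state_t^j))^\top \nabla_{\state_t} \tilde g_{\neg k}^{j,\perp} = \mathbf{0}$ for \emph{every} feasible gradient. Because the columns of $\nabla_{\state_t} \tilde g_{\neg k}^{j,\perp}$ span the orthogonal complement of $\nabla_{\state_t} \tilde g_{\neg k}$, every such feasible gradient lies in the one-dimensional subspace $\textrm{span}(\nabla_{\state_t} \tilde g_{\neg k})$. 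Applied to the planted point, this gives $\alpha \nabla_{\state_t} g_{\neg k}^*(\phisep(\state_t^j)) = \beta \nabla_{\state_t} \tilde g_{\neg k}(\phisep(\state_t^j))$ for some scalar $\beta \in \mathbb{R}$.

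Finally, $p_5^* > 0$ fixes the sign. Evaluating the objective of Prob.~\ref{prob:opposite} at the feasible point $\alpha \nabla_{\state_t} g_{\neg k}^*$ yields a value $\geq p_5^* > 0$; substituting the relation from the previous step gives $\beta \|\nabla_{\state_t} \tilde g_{\neg k}\|_2^2 \geq p_5^* > 0$. Note $\nabla_{\state_t} \tilde g_{\neg k} \neq \mathbf{0}$ (otherwise Prob.~\ref{prob:opposite}'s objective is identically zero and $p_5^* = 0$), so $\beta > 0$, and taking $\alpha_t^j := \alpha/\beta > 0$ produces the claimed positive scaling $\nabla_{\state_t} g_{\neg k}^*(\phisep(\state_t^j)) = (\beta/\alpha)\, \nabla_{\state_t} \tilde g_{\neg k}(\phisep(\state_t^j))$. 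I do not anticipate a serious obstacle: the only subtleties are verifying that the three problems share a feasible set (true by inspection) and that $\|v\|_1 = 0$ iff $v = \mathbf{0}$, both of which are immediate.
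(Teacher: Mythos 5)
Your proof is correct and follows essentially the same route as the paper's: invoke Thm.~\ref{thm:scaling} to place a positive scaling of the true gradient in the (shared) feasible set, use $p_4^*=0$ to force all feasible gradients into $\textrm{span}(\nabla_{\state_t} \tilde g_{\neg k})$, and use $p_5^*>0$ to fix the sign of the resulting scalar. One trivial slip: since $\alpha \nabla_{\state_t} g_{\neg k}^* = \beta \nabla_{\state_t} \tilde g_{\neg k}$, the scaling claimed in the theorem is $\beta/\alpha$ (as your final displayed relation correctly states), not the $\alpha/\beta$ you name as $\alpha_t^j$; either way it is positive, so the conclusion stands.
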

\begin{proof}\normalfont
 First, $p_4^* = 0$ iff all feasible $\nabla_{\state_t} g_{\neg k}(\phisep(\state_t^j))$ lie in $\textrm{span}(\nabla_{\state_t} \tilde g_{\neg k})$, as the objective of Prob. \ref{prob:orthogonal} is just the norm of the coordinates of $\nabla_{\state_t} g_{\neg k}(\phisep(\state_t^j))$ in the basis of the orthogonal complement, i.e. there exists $\beta \in \mathbb{R}$ such that $\nabla_{\state_t} g_{\neg k}^j = \beta \nabla_{\state_t} \tilde g_{\neg k}^j$. Second, if $p_5^* > 0$, then $\nabla_{\state_t} g_{\neg k}^{j,\top} \nabla_{\state_t} \tilde g_{\neg k}^j > 0$, for all $\nabla_{\state_t} g_{\neg k}^j \in \textrm{proj}_{\nabla_{g_{\neg k}}}(\mathcal{F})$. Third, by combining these two results, we have that $\beta \nabla_{\state_t} \tilde g_{\neg k}^{j,\top} \nabla_{\state_t} \tilde g_{\neg k}^j > 0$, implying $\beta > 0$, as $\Vert \nabla_{\state_t} \tilde g_{\neg k}^j\Vert > 0$ in order for $t \in \ttight^j$. Finally, from Thm. \ref{thm:scaling}, we know $\nabla_{\state_t} g_{\neg k}^* = \gamma \nabla_{\state_t} g_{\neg k}$, for some $\gamma > 0$ and $\nabla_{\state_t} g_{\neg k} \in \textrm{proj}_{\nabla_{g_{\neg k}}}(\mathcal{F})$; we recover the theorem statement by setting $\alpha = \beta\gamma$.
\end{proof}
Our approach is to use the tight points with a unique KKT-consistent unit normal vector to train our GP constraint (see Sec. \ref{sec:method_gp}); we call these gradients \textit{robustly-identified} and their timesteps as $\trobust^j \subseteq \ttight^j$, for all $j = 1,\ldots,N_\textrm{dem}$.

\subsection{Embedding KKT-based information in a Gaussian process}\label{sec:method_gp}
Let the number of robustly-identified points over all demonstrations be $N_\textrm{robust}$. We collect the constraint states corresponding to the robustly-identified gradients and denote it as $\mathcal{D}_\cstate \doteq \{\phisep(\state_t^j) \mid t \in \trobust^j, j \in \{1, \ldots, N_\textrm{dem} \}\} \in \mathbb{R}^{N_\textrm{robust} \times n_c}$. We also collect the robustly-identified gradients $\mathcal{D}_\nabla \doteq \{\nabla_{\state_t}g_{\neg k}(\phisep(\state_t^j)) \mid t \in \trobust^j, j \in \{1, \ldots, N_\textrm{dem}\}\} \in \mathbb{R}^{N_\textrm{robust} \times n_c}$. Moreover, as the value of the unknown constraint equals zero at all robustly-identified points, we can define a third set $\mathcal{D}_g \doteq \mathbf{0}_{N_\textrm{robust}}$, i.e. the zero vector of size $N_\textrm{robust}$. We wish to learn a GP which is consistent with both the constraint values $\mathcal{D}_g$ as well as the constraint gradients $\mathcal{D}_\nabla$. Note that derivative of a GP is a GP, and the joint distribution of a GP and its derivative is also a GP \cite{solak}; forming this joint GP provides us an avenue for incorporating both the constraint value and gradient information. Like the derivation of the GP posterior without derivative observations (Sec. \ref{sec:gp}), one can derive the posterior distribution conditioned on the training inputs, their derivatives, and the outputs. For brevity, please refer to \cite{gp_derivatives} for detailed derivations. For this joint GP, we can define the training inputs and outputs as $\mathbf{X} = \mathcal{D}_\cstate$ and $\mathbf{Y} = [\mathcal{D}_g, \mathcal{D}_\nabla]$, comprising the dataset $\mathcal{D} = (\mathbf{X}, \mathbf{Y})$, and use the negative marginal log likelihood $-\mathcal{L}_\textrm{MLL}$ \cite[Eq. 2.30]{gpml} to optimize the GP hyperparameters.

A key subtlety is that as the learned constraint is a GP, its constraint value at any given query point is not deterministic; rather, it is sampled from a Gaussian distribution whose mean and variance is determined by the training data and the location of the query point (i.e. $g_{\neg k}(\cstate_m) \sim \mathcal{N}(\mu_m,\sigma^{2}_m\mid\mathcal{D},\cstate_m))$. Moreover, while the demonstrations are guaranteed safe by assumption (i.e. $g_{\neg k}(\cstate_t) \leq 0$ for all $t$), the stochasticity of the GP values prevents us from enforcing that the demonstrations are safe with probability $1$, as the Gaussian has infinite support. Instead, we select a standard deviation threshold $\rho$ for which we want the demonstrations to be safe and add a hinge loss on its violation, where $R = \sum_{j=1}^{N_\textrm{dem}} T^j$:
\begin{equation}
    \mathcal{L}_\textrm{feas} = (1/R)\textstyle\sum_{n=1}^R \max(\mu(x_n\mid \mathcal{D}) + \rho\sigma(x_n\mid \mathcal{D}), 0).
\end{equation}

Then the full training loss is $\mathcal{L} = -\mathcal{L}_\textrm{MLL} + \mathcal{L}_\textrm{feas}$.

\subsection{Planning with the learned constraint}\label{sec:method_planning}

We describe a method for planning with the learned GP constraint. As the GP is probabilistic, so is the boundary of the learned safe set $\safeset$ \eqref{eq:safeset}; thus, our planner provides probabilistic, rather than deterministic, safety guarantees. As the dynamics are assumed known, we only consider the uncertainty of the GP constraint in planning. Recall that we wish to connect a start and goal state with a dynamically-feasible trajectory $\trajxu^\textrm{plan}$ that satisfies the true constraint $\mathbf{g}_{\neg k}^*(\phi(\trajxu^\textrm{plan}))$ with probability $1-\delta$. From the assumption (Sec. \ref{sec:problem_statement}) that $g_{\neg k}^*$ is drawn from the GP, we achieve this by satisfying the learned constraint $\mathbf{g}_{\neg k}(\phi(\trajxu^\textrm{plan}))$ with probability $1-\delta$. Unlike previous work in planning under uncertainty \cite{prob_approach_path_planing, ccrrt, ccrrt_star}, we make no structural assumptions on the dynamics or the shape of the constraints.

We modify a constrained kinodynamic RRT \cite{lavalle2006planning} to plan with the learned constraint, though our method can be adapted to other sampling or optimization-based planners. Our planner, which we refer to as Gaussian Process-Chance Constrained RRT (GP-CCRRT), is presented in Alg. \ref{alg:cdfrrt_extend}. The main novelty of the proposed planner is its GP constraint-checker: when a new node $\state_q$ is sampled, instead of checking if $\state_q$ satisfies the timestep-independent chance constraint $\Prob(g_{\neg k}(\phisep(\state_q)) \le 0) \geq 1-\delta$, we check if we can connect $\state_q$ to the tree by exactly evaluating the joint probability of safety for the full trajectory from the root to the candidate node $\state_q$ (line 7-8). Our ability to efficiently compute this probability relies on the Gaussian structure of our learned GP constraint representation. Let the full trajectory from the root to $\state_q$, denoted $\xi_q$, be length $K$. Evaluating the learned GP $g_{\neg k}(\cdot)$ at those $K$ points returns the mean and covariance matrix of the predictive posterior distribution, which is a $K$-variate Gaussian (line 7). Then, the trajectory safety probability, $p_\textrm{safe}^q \doteq \Pr(\bigwedge_{n=1}^K (g_{\neg k}(\cstate_n) \le 0))$, is obtained by integrating the density of this $|K|$-variate Gaussian from $-\infty$ to $0$ in each dimension (i.e. the cumulative distribution function, or \texttt{cdf}), evaluated at $\mathbf{0}_K$ (line 8). Highly-optimized implementations of the multivariate Gaussian CDF \cite{genz} enable fast CDF evaluation at planning time. Finally, node $\state_q$ is accepted if $\xi_q$ is safe with at least probability $1-\delta$ (line 8). We visualize the GP constraint check in Fig. \ref{fig:gp_planner}.

\begin{figure}
\centering
\includegraphics[width=\linewidth]{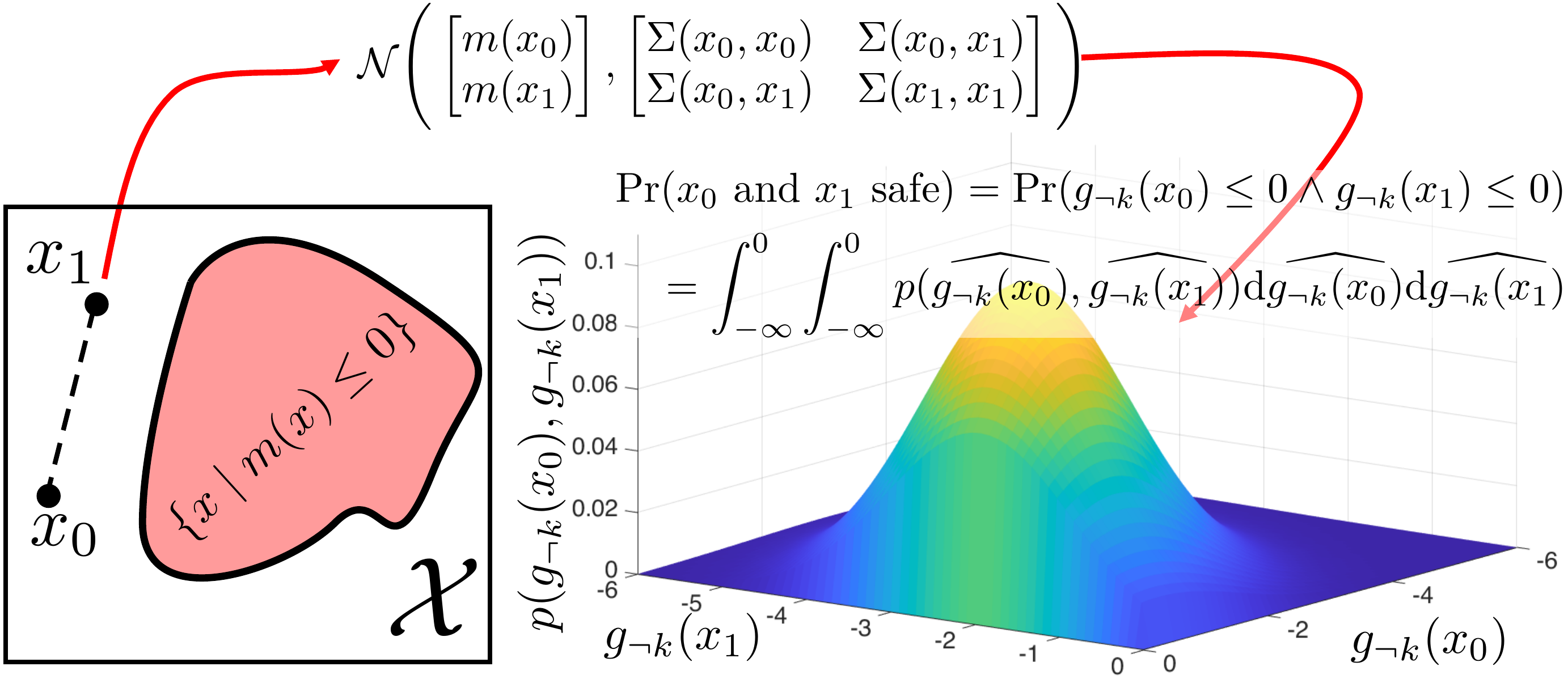}
\caption{Illustration of GP-CCRRT. A candidate length $2$ trajectory from the root of the RRT induces a bivariate Gaussian; its safety probability can then be calculated by calculating the CDF of the induced Gaussian.}\label{fig:gp_planner}
\end{figure}

\begin{algorithm}[h!] \label{alg:cdfrrt_extend}
\SetAlgoLined
\DontPrintSemicolon
\KwIn{$x_I$, $x_G$, $\epsilon$, goal\_bias}
\SetKwFunction{SampleState}{SampleState}
\SetKwFunction{SampleControl}{SampleControl}
\SetKwFunction{NearestNeighbor}{NearestNeighbor}
\SetKwFunction{Model}{Model}
\SetKwFunction{ConstructPath}{ConstructPath}
\SetKwFunction{ShootToDesired}{ShootToDesired}
\SetKwFunction{PathFromRoot}{PathFromRoot}

$\mathcal{T}_\state \leftarrow \{x_I\}, \mathcal{T}_\control \leftarrow \{\}$ \\

\While{\upshape True}{
    $\state_{\textrm{desired}} \leftarrow $ \SampleState{\upshape goal\_bias} \\
    $\state_{\textrm{near}} \leftarrow $ \NearestNeighbor{\upshape $\mathcal{T}_\state$, $x_{\textrm{desired}}$} \\
    $\xi_{\textrm{prev}} \leftarrow \PathFromRoot(\state_\textrm{near})$ \\
    $\state_q, \control_q \rightarrow \ShootToDesired(\state_\textrm{near}, \state_\textrm{desired})$\\
    $\mu, \Sigma \leftarrow g_{\neg k}(\phi(\xi_\textrm{prev} \cup \state_q))$ \\
    \lIf{\normalfont\texttt{cdf}($\mathcal{N}(\mu, \Sigma), \mathbf{0})$ $\geq 1-\delta$}{
        $(\mathcal{T}_\state, \mathcal{T}_\control)\leftarrow (\mathcal{T}_\state, \mathcal{T}_\control) \cup (\state_q, \control_q)$\hspace{-20pt} }
    \If{\normalfont $\Vert \state_q - x_G \Vert \le \epsilon$}{
        return $\trajxu^\plan \leftarrow$ \ConstructPath{\upshape $\mathcal{T}_\state$, $\mathcal{T}_\control$, $x_q$}\hspace{-15pt}
    }
}
\caption{GP-CCRRT}
\end{algorithm}

\section{Results}\label{sec:results}

We evaluate our method on learning complex, nonlinear constraints demonstrated on a point robot, nonholonomic car, quadrotor, and arm. Please see the video for visualizations. We train all GPs using GPyTorch with an RBF kernel using the Adam optimizer. We obtain demonstrations by solving Prob. \ref{prob:fwd_prob} using IPOPT \cite{ipopt}. We compare with two baselines. The first, \cite[Prob. 4]{ral}, approximates the unknown constraint as a union of $B$ axis-aligned boxes (as in \cite[Sec. 4.4]{corl}). In the second, we use a neural network (NN) instead of a GP to fit the constraint using the same data; in all examples, the NN has 5 hidden layers of size 256, 512, 1024, 512, and 128 and is trained for 200 epochs with learning rate $5 \times 10^{-5}$. To train the NN, we use MSE losses on the target tight constraint values and gradients with a hinge loss that encourages all points to be feasible. We also compute (Table \ref{table:results}) how many states are falsely claimed safe (``false safe (FS)") or unsafe (``false unsafe (FU)'') by setting $\safeset = \{\cstate \mid \mu(\cstate \mid \mathcal{D}) + \tau\sigma(\cstate \mid \mathcal{D}) \le 0 \}$ for standard deviations $\tau \in \{0, 1, 2, 2.33\}$. While $\safeset$ is not used in GP-CCRRT (it uses joint instead of individual safety probabilities), it is a good surrogate for constraint accuracy. Finally, Probs. \ref{prob:tight}-\ref{prob:opposite} are all solved in 0.5 seconds.

\begin{table}[thb!] \centering\footnotesize
\begin{tabular}{ll|llll|ll}
                      &                   &  \multicolumn{4}{c}{Our method} & \multicolumn{2}{c}{Baselines} \\
                      &                   &  $0\sigma_p$ & $1\sigma_p$ & $2\sigma_p$ & $2.33\sigma_p$ & \cite{ral} \hspace{-10pt} & NN \\ \hline
\hspace{-5pt}\multirow{2}{*}{Cup}\hspace{-15pt}  & FS (\%)   & 0.004   & 0.000   & 0.000   & 0.000      & 22.616  & 52.706 \hspace{-10pt}        \\
                      & FU (\%) & 0.022   & 1.294  & 3.532  & 4.684     & 5.284 &  0.000 \hspace{-10pt}        \\ \hline \hline
\hspace{-5pt}\multirow{2}{*}{Car}\hspace{-15pt}  & FS (\%)   & 1.741   & 0.319   & 0.071   & 0.042      & 5.947 & 15.555 \hspace{-10pt}        \\
                      & FU (\%) & 0.424   & 58.761  & 64.807  & 66.305     & 0.117 &  0.000 \hspace{-10pt}        \\ \hline \hline
\hspace{-5pt}\multirow{2}{*}{Box}\hspace{-15pt}  & FS (\%)   & 3.230   & 0.462   & 0.189   & 0.138      & 0.000 & 10.859 \hspace{-10pt}      \\
                      & FU (\%) & 1.648   & 81.146  & 86.641  & 87.190     & 0.000  &0.000  \hspace{-10pt}     \\ \hline \hline
\hspace{-5pt}\multirow{2}{*}{Tree}\hspace{-15pt} & FS (\%)   & 0.593   & 0.057   & 0.004   & 0.000      & 14.867 & 23.427  \hspace{-10pt}      \\
                      & FU (\%) & 0.729   & 11.108  & 31.412  & 37.773     & 0.160  &  0.000 \hspace{-10pt}     \\ \hline \hline
\hspace{-5pt}\multirow{2}{*}{Arm}\hspace{-15pt}  & FS (\%)   & 1.403   & 0.163   & 0.012   & 0.003      & 17.179    & 15.029 \hspace{-10pt}   \\
                      & FU (\%) & 0.658   & 57.294  & 70.644  & 73.490     & 0.808 & 0.151     \hspace{-10pt}
\end{tabular}
\caption{GP classification errors (False Safe (FS); False Unsafe (FU)).}\label{table:results}
\end{table}

\begin{figure}
\centering
\includegraphics[width=\linewidth]{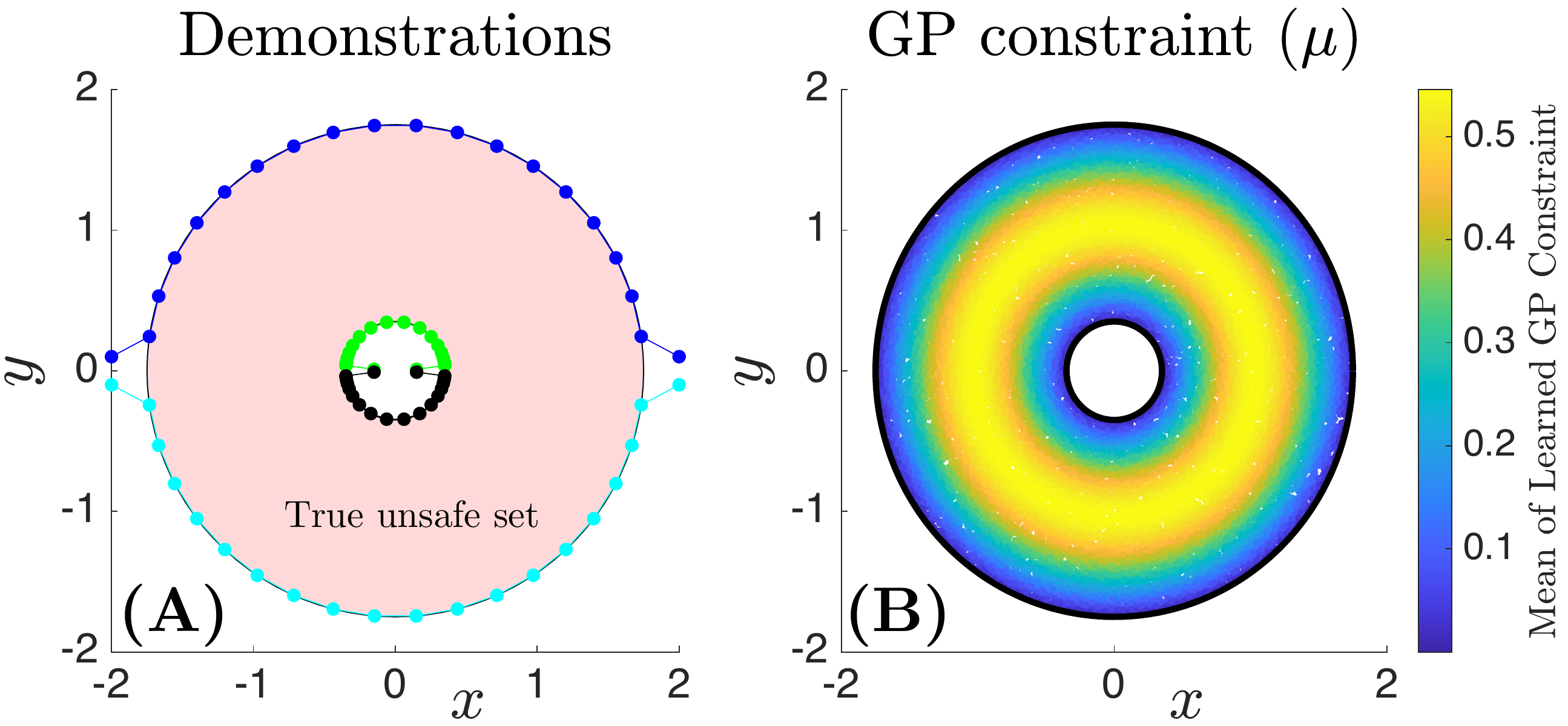}
\caption{2D hollow cup constraint. (A) Demonstrations and true constraint. (B) Learned GP posterior mean, with true constraint overlaid.}\label{fig:cup}
\end{figure}

\noindent\textbf{2D cup constraint}: The purpose of our first example is to demonstrate that our approach can learn complicated unsafe sets with hollow interiors. Consider demonstrations wiping the interior and exterior of a cup. The cup is centered at the origin and has inner and outer radii $\underline r$ and $\bar r$, respectively. One way to represent the wiping task is to minimize the cumulative distance from the center of the rim over time, i.e. $c(\xi) = \sum_{t=1}^T \Vert \Vert \chi_t \Vert - \frac{\bar r + \underline r}{2}\Vert_2^2$, subject to point-robot dynamics, control constraints, and nonpenetration with the cup, where $\chi_t = [x_t, y_t]$. In this example, we aim to learn the shape of the cup from the demonstrations (i.e. the unsafe set between the inner and outer cup radii, see Fig. \ref{fig:cup}.A). Given four demonstrations (Fig. \ref{fig:cup}.A), and by training the GP for 500 epochs at learning rate 0.05, we are able to recover the cup shape (Fig. \ref{fig:cup}.B) with very high accuracy (see Table \ref{table:results}). In contrast, neither of the baselines can accurately recover the constraint (Table \ref{table:results}); the NN fails to accurately fit the constraint gradients, while \cite{ral} fails to accurately fill the interior of the unsafe set (when allocated 5 boxes in its representation).

\begin{figure}
\centering
\includegraphics[width=\linewidth]{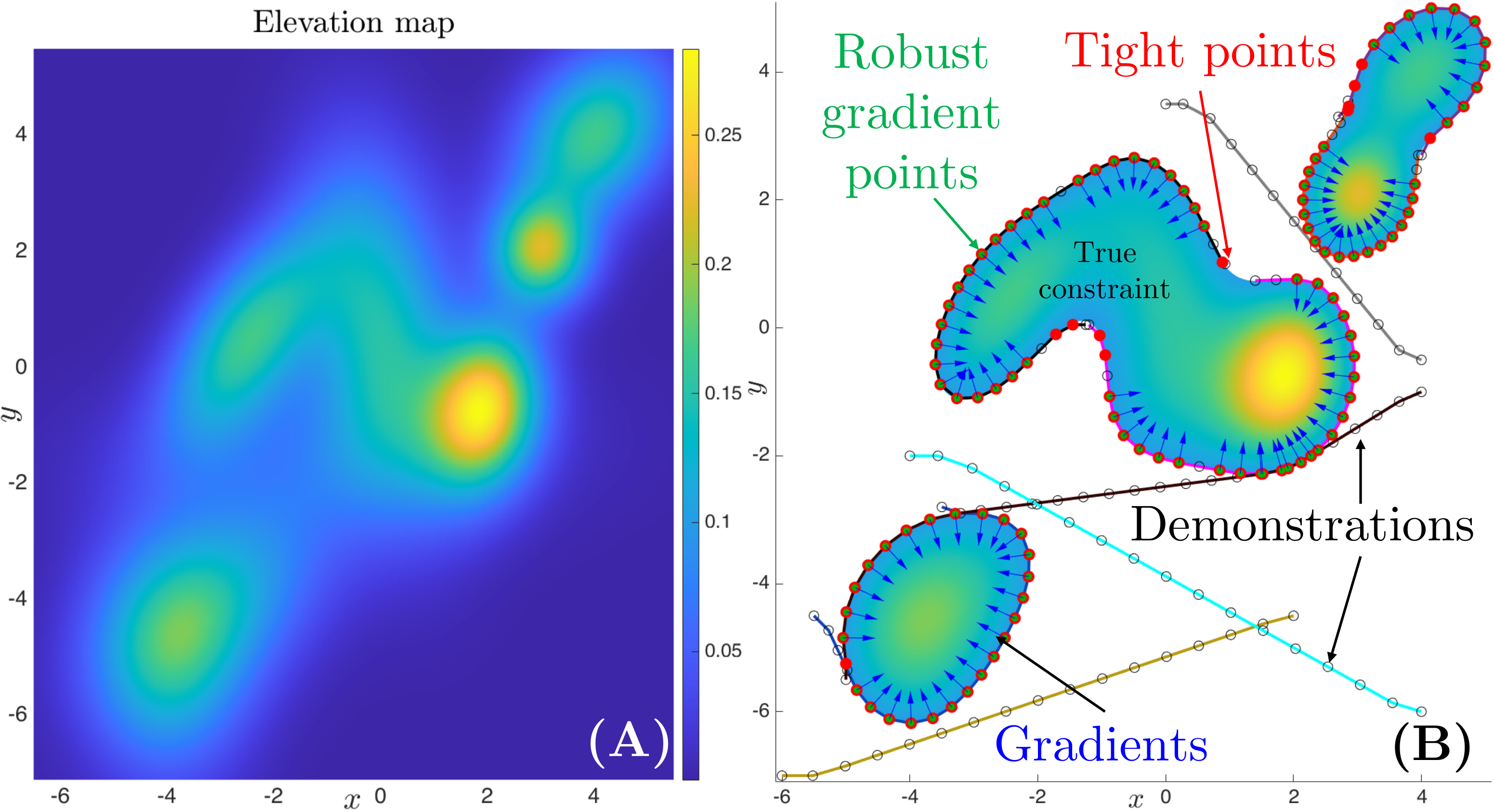}
\caption{5D car example. \textbf{(A)} Hilly terrain map. \textbf{(B)} Demonstrations; identified tight points (red); robustly-identified timesteps (green); robustly-identified gradients (blue).}\label{fig:car_id}
\end{figure}

\begin{figure}
\centering
\includegraphics[width=\linewidth]{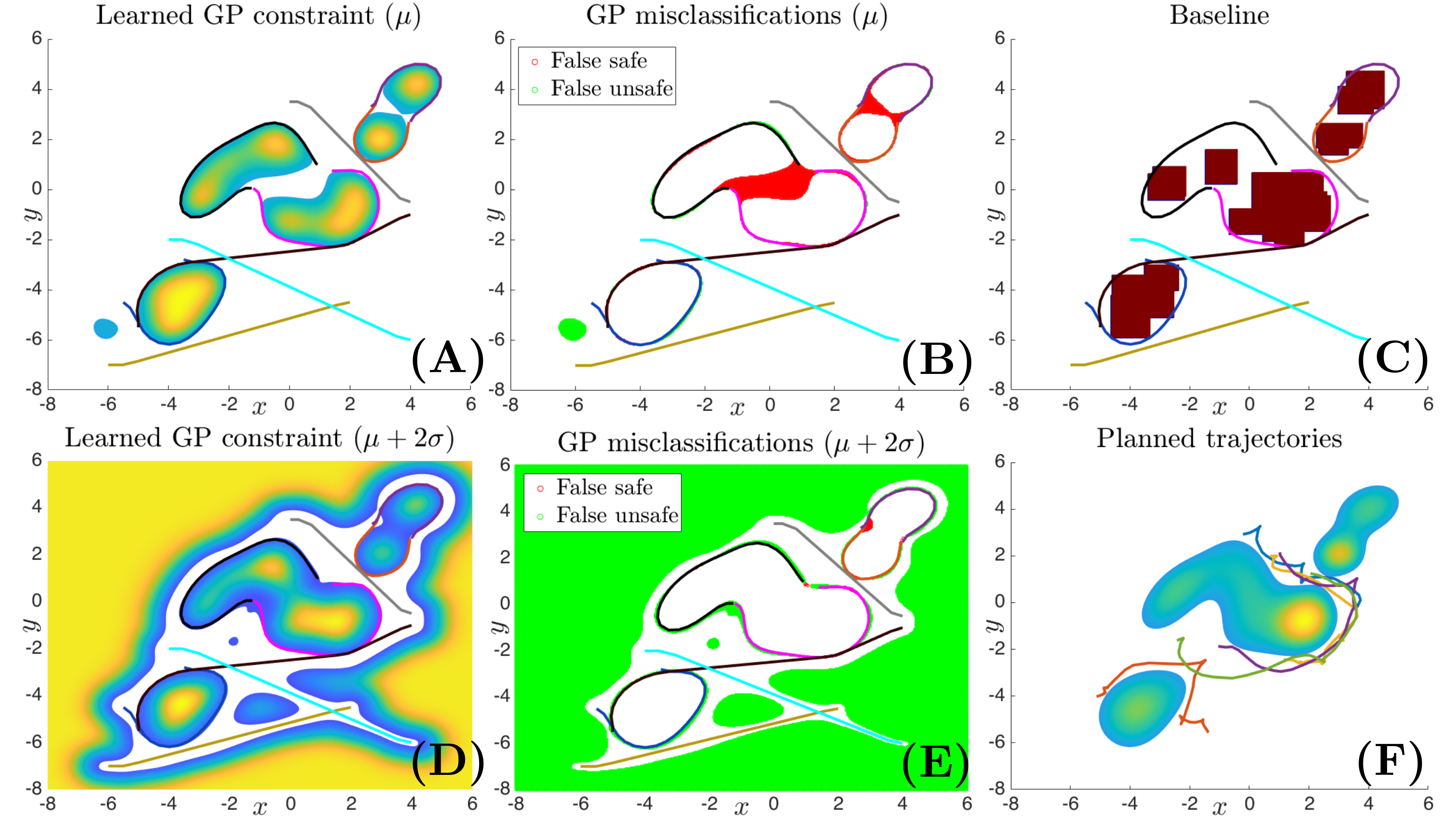}
\caption{5D car example, learned. \textbf{(A)} Learned GP constraint, mean function. \textbf{(B)} Mean function misclassifications. \textbf{(C)} Constraint learned using baseline \cite{ral}. \textbf{(D)} Learned GP constraint, buffered by GP uncertainty. \textbf{(E)} Buffered misclassifications. \textbf{(F)} Plans computed using learned GP constraint.}\label{fig:car_subplots}
\end{figure}

\noindent\textbf{5D nonholonomic car}: We first evaluate our method on a 5D car, showing that can learn a nonlinear, disconnected constraint without prior knowledge. Consider an autonomous vehicle driving on hilly terrain (Fig. \ref{fig:car_id}.A) which must stay below a maximum elevation; the corresponding unsafe set (i.e. the subset of the map above the elevation limit) is the filled-in region in Fig. \ref{fig:car_id}.B. We use the second-order unicycle dynamics from \cite[Eq. 13.46]{lavalle2006planning} with a discretization time of $\Delta T$ = 0.5. Prior work \cite{wafr} studied a similar example; however, in \cite{wafr}, the map (i.e. the constraint representation) is given, so the only the elevation threshold must be learned. In contrast, we are not given the map, and must learn the representation jointly with the threshold -- a much harder problem. 

We obtain 9 demonstrations minimizing the $xy$-space path length, and a control constraint of $\Vert u_t \Vert_2^2 \le 5$ is imposed for all time. Here, $\phisep$ maps to the $xy$-state components. By solving Prob. \ref{prob:tight}, we identify tight points (Fig. \ref{fig:car_id}.B, red). Next, by solving Probs. \ref{prob:label}-\ref{prob:opposite}, we find robustly-consistent gradients (Fig. \ref{fig:car_id}.B, blue arrows) at a subset of the tight points (Fig. \ref{fig:car_id}.B, green). Note the accuracy of Prob. \ref{prob:tight}, which identifies that the cyan trajectory is not tight, despite it curving due to dynamical constraints, and correct identification for the black trajectory, which makes and breaks contact with the constraint boundary. The few tight points that are not identified (e.g. near $[-2, 2]$) are where the constraint boundary is flat; thus, the sub-trajectory is optimal despite being on the boundary (Fig. \ref{fig:label_shape}.C). Note that most tight points also have robustly-identifiable gradients; the exceptions are before/after the system leaves the constraint boundary; this is due to the dynamics, as the car may brake/turn to prevent constraint violation, expanding the set of consistent gradients.

We train the GP for 150 epochs at learning rate $0.08$. In Fig. \ref{fig:car_subplots}, we show the GP accuracy and compare with the baseline \cite{ral}. Overall, the GP mean faithfully recreates the true unsafe set (Fig. \ref{fig:car_subplots}.A), though it misclassifies (Fig. \ref{fig:car_subplots}.B) the center of the middle and top obstacles; this is as there are few tight points in that area. Still, when buffering the GP with its uncertainty (Fig. \ref{fig:car_subplots}.D-E), the regions which are falsely classified shrink (see Table \ref{table:results}), though this is at the cost of conservatively marking much of the map far from the demonstrations as unsafe. This arises from the GP's ability to capture epistemic uncertainty and can actually be desirable as it leads to cautious plans that remain near the data and away from unseen constraints that are inactive on the demonstrations. For the baseline \cite{ral}, we use 20 boxes and terminate the optimization after 60 minutes. The result has higher error than the learned GP constraint and fails to capture the constraint shape. The NN baseline is also inaccurate, as it drives the value of most tight points to 0 but fails to fit the gradient data (Table \ref{table:results}). Finally, we plan with GP-CCRRT with a safety probability of 0.9; five plans are shown in Fig. \ref{fig:car_subplots}.F, which all satisfy $g_{\neg k}^*(\cdot)$. On average, our planner solves in 3 minutes, with 20 and 50 percent of that time being dedicated to GP posterior and CDF computation, respectively; this can be sped up via lazy checking of the CDF constraint. Overall, this example suggests we can learn nonconvex constraints with minimal \textit{a priori} knowledge.

\noindent\textbf{12D quadrotor}: We evaluate our method on two constraint learning tasks on a 12D quadrotor (see \cite[Eq. 19]{auro} for the dynamics). We first show our method achieves comparable performance with \cite{ral} for learning constraints that can be represented as a union of boxes. We are given 24 short demonstrations (Fig. \ref{fig:quad_box}.A) that minimize $xyz$ path length while satisfying a control constraint $\Vert u_t \Vert_2^2 \le 100$, for all $t$. Moreover, the baseline \cite{ral} is also provided the information that the constraint can be represented as a union of two axis-aligned boxes (thus learning the constraint exactly). We train the GP for 600 epochs at learning rate $0.1$, and the learned GP (Fig. \ref{fig:quad_box}.B) captures the union-of-boxes shape well. Two main inaccuracies are A) the interior of the learned box is hollow (this is expected, as no data can be collected in the obstacle) and B) there are some ``ringing effects" (this is caused by the GP, which favors smooth functions, attempting to fit the discontinuous box gradients). Numerically, Table \ref{table:results} shows that the GP misclassifications are low, and moreover, the number of states that are falsely claimed to be safe can be driven near zero by buffering with the GP uncertainty. The GP outperforms the NN baseline, which again struggles to fit the gradient data. Overall, this example suggests our method also performs well where previous methods excel.

Next, we evaluate our approach on learning a complex nonlinear constraint which is well beyond the capability of the baseline \cite{ral}. We are given 25 demonstrations (Fig. \ref{fig:tree}.A, black) avoiding collisions with an unknown tree-like obstacle to be learned, which is a union of three ellipsoids (Fig. \ref{fig:tree}.A, blue). Crucially, we lack \textit{a priori} knowledge on the structure of $g_{\neg k}^*(\cdot)$. The dynamics and cost function used are as in \cite{corl20} and \cite{ral}, respectively, and $\phisep$ maps to the $xyz$-state components. We train the GP for 150 epochs at learning rate $0.08$. In comparing with the baseline, we use 6 axis-aligned boxes and time out the optimization after 2 hours.

We visualize our results in Fig. \ref{fig:tree}.B-C. Our learned GP is visually accurate (Fig. \ref{fig:tree}.B), with minor errors (Fig. \ref{fig:tree}.C) where there are no tight points. This is reasonable, as we cannot expect the GP to be accurate far from the data. In contrast, the baseline \cite{ral} is inaccurate (Fig. \ref{fig:tree}.D), failing to cover the upper portion of the obstacle; moreover, the shape is inaccurate due to the limitations of axis-aligned boxes. The NN baseline is also inaccurate, failing to fit the constraint gradients (Table \ref{table:results}). Numerical results in Table \ref{table:results} suggest that the GP mean is the most accurate when considering both metrics. As before, the ``False Safe" rate can be made smaller at the cost of conservativeness by buffering with the predictive uncertainty. In contrast, the baseline has a high ``False Safe" rate, which can lead to constraint violation in planning. We show six plans computed via GP-CCRRT (Fig. \ref{fig:tree}.A, gold), which are safe with probability at least 0.9, and which are safe for the true constraint. On average, planning takes 90 seconds; 55 and 20 percent of this is due to GP posterior and CDF calculations. This example suggests our method scales to complex constraints on high-dimensional systems while requiring minimal prior information.

\begin{figure}
\centering
\includegraphics[width=\linewidth]{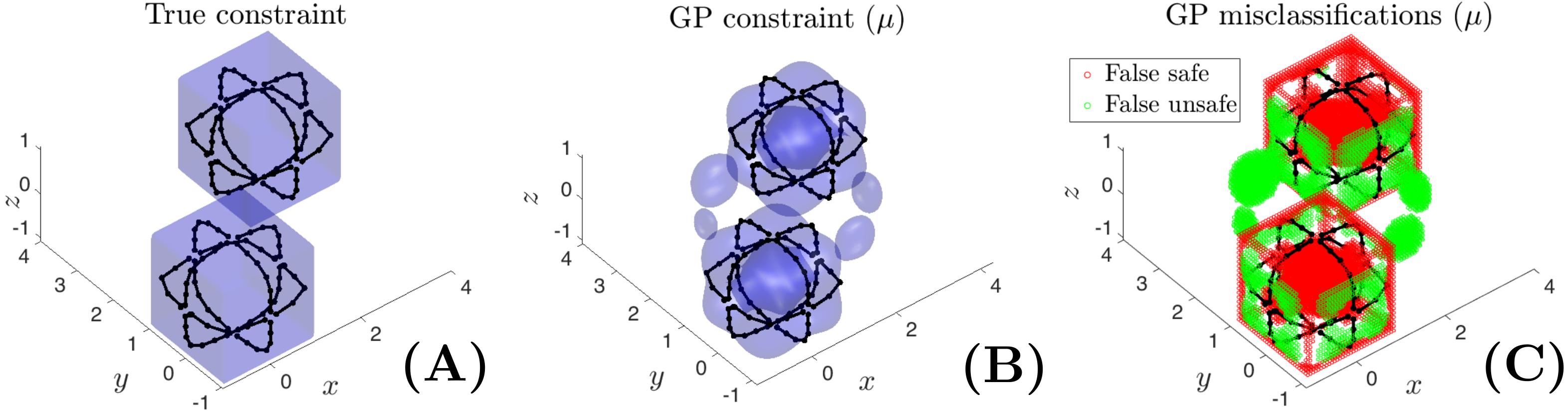}
\caption{12D quadrotor box example. \textbf{(A)} Two box obstacles; demonstrations. \textbf{(B)} Learned GP constraint (mean). \textbf{(C)} GP misclassifications (mean).}\label{fig:quad_box}
\end{figure}
\begin{figure}
\centering
\includegraphics[width=\linewidth]{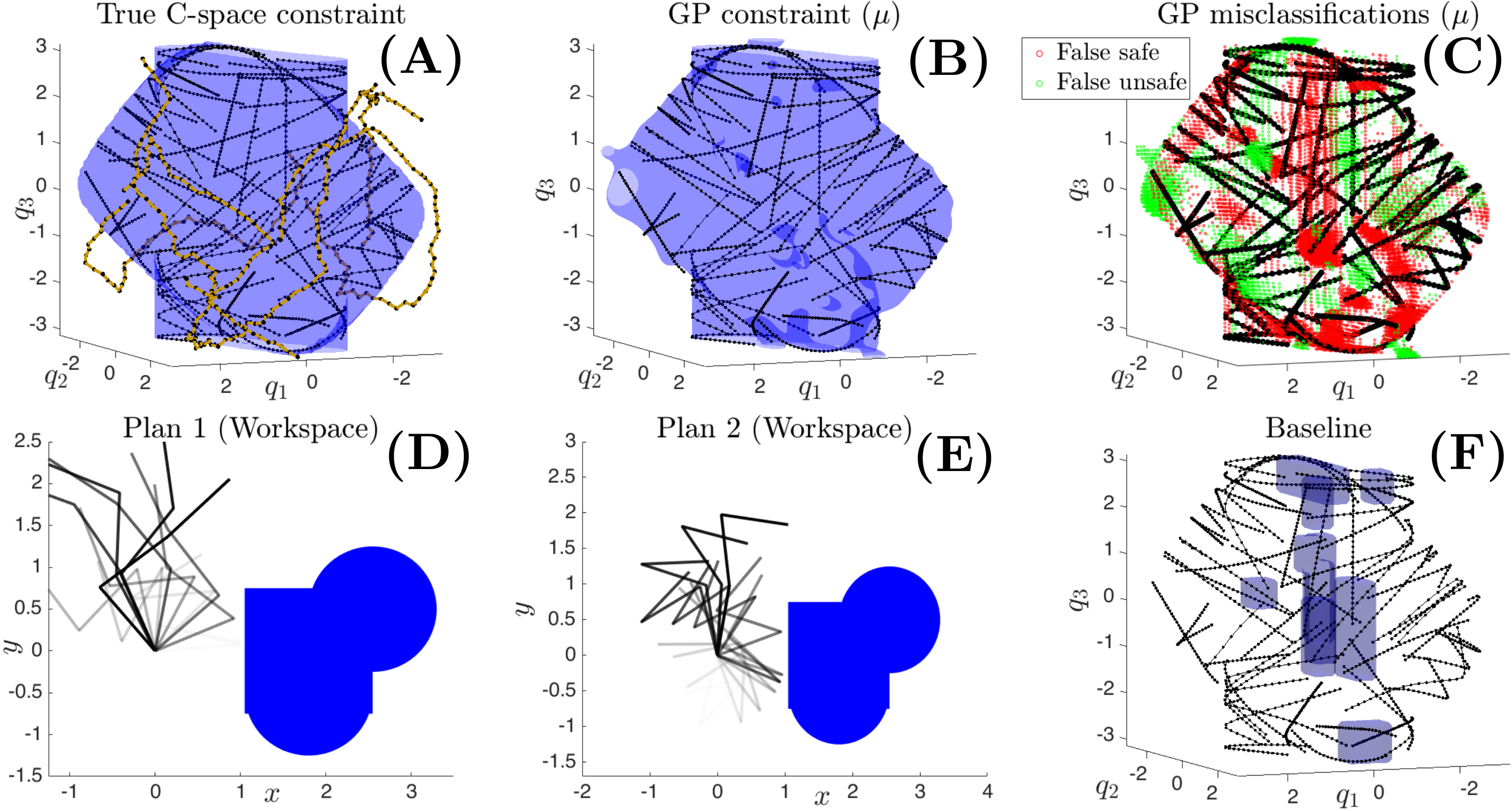}
\caption{3-link planar arm example, learned. \textbf{(A)} True C-space constraint; plans found using GP constraint (gold). \textbf{(B)} Learned GP constraint, mean function. \textbf{(C)} Mean function misclassifications. \textbf{(D-E)} Plans computed using learned GP constraint (workspace). \textbf{(F)} Constraint learned with baseline \cite{ral}.}\label{fig:arm}
\end{figure}

\noindent\textbf{Planar 3-link manipulator}: Finally, we evaluate our method on a kinematic planar 3-link arm. The arm is mounted at the origin and must avoid a blue nonconvex workspace obstacle (Fig. \ref{fig:arm}.D, E). To show that our method can learn complex constraints on articulated robots, we learn the configuration space (C-space) representation of the obstacle (Fig. \ref{fig:arm}.A, blue), which is also nonconvex. We obtain 50 demonstrations (Fig. \ref{fig:arm}.A, black) which minimize the joint-space path length, i.e. $c(\trajxu) = \sum_{t=1}^T \Vert q_{t+1} - q_t\Vert^2$ subject to a control constraint $\Vert q_{t+1} - q_t \Vert \le 0.1$, for all $t$. We train the GP for 150 epochs at learning rate 0.08, obtaining a GP whose posterior mean is visually consistent with the true constraint (Fig. \ref{fig:arm}.B). The posterior mean misclassifications are mostly on the interior of the C-space obstacle (as expected, since no data can be collected there), as well as minor errors on the constraint surface which are further away from the data. We plan via GP-CCRRT, taking two minutes on average, where 40 and 45 percent of the time is due to GP posterior and CDF computations, respectively. We use GP-CCRTT to obtain plans that are safe with probability greater than $0.9$; time-lapses of two such plans are shown in Fig. \ref{fig:arm}.D-E. Finally, we evaluate the baseline \cite{ral} with 10 boxes, timing out the optimization after 6 hours. Due to the number of demonstrations and constraint parameters, the baseline struggles (Fig. \ref{fig:arm}.F), returning boxes that do not adequately satisfy the KKT conditions and fail to capture the features of the true constraint. As before, the posterior mean has low ``false safe" and ``false unsafe" rates, and the ``false safe" rate can be reduced via buffering (see Table \ref{table:results}). In contrast, the baseline has a much higher ``false safe" rate, since it fails to cover most of the unsafe set, though it has a low ``false unsafe" rate, since it marks most of the space as safe. The NN baseline also fails to fit the gradient data, leading to low accuracy (Table \ref{table:results}). Overall, this example suggests that we can learn non-convex, non-workspace constraints on articulated robots while requiring minimal prior information, which is a necessity for C-space obstacles, which can be unintuitive.

\section{Discussion and Conclusion}

In this paper, we learn constraints from demonstrations with minimal \textit{a priori} knowledge by finding where the unknown constraint is tight and a scaling of its gradient at those points via the KKT conditions, and then training a GP-represented constraint that is consistent with and generalizes this data. We also show that the Gaussian structure of the GP uncertainty can exploited in an RRT-based planner to compute plans which satisfy the unknown constraint with a specified probability. Our results on a 5D car, 12D quadrotor, and 3-link planar arm show we can learn complex constraints on realistic systems which prior methods cannot handle. We conclude by discussing design choices and future work.

\noindent\textbf{Why use a GP constraint representation?} Our learning problem (fitting a function using zero level set data (the tight points) and its gradients at those points) closely relates to fitting manifolds \cite{ecomann} and signed distance functions \cite{signed_sdf} to data (though our method differs greatly in how it obtains the data, i.e. the KKT conditions). In both \cite{ecomann} and \cite{signed_sdf}, neural networks (NN) are used to fit large datasets on the order of $10^3 \sim 10^4$ and $10^5$ for \cite{ecomann} and \cite{signed_sdf}, respectively. In Sec. \ref{sec:results}, we show that in training the NN with only $\sim 10^2$ tight data points, the NN failed to provide an accurate fit. In contrast, derivative data can be directly embedded via the joint GP, which fits the constraint better with less data (on the order of $10^2$). Moreover, GPs handle constraint uncertainty in a principled way, which is crucial for safe planning.

\noindent\textbf{Limitations and future work}: First, as GPs are non-parametric, dense coverage of the constraint space with tight points is needed to reduce the predictive uncertainty. This results in cautious plans that stay near the training data, and may needlessly restrict the robot. In future work, we will explore semi-parametric models which combine a known, insufficient parameterization with a non-parametric GP to reduce uncertainty. Second, our method assumes demonstrations are precisely locally-optimal, but this is often untrue due to noisy observations or partial observability \cite{wafr20}; in future work, we will investigate suboptimality models (e.g. \cite{ziebart}) that can be used to adjust the confidence in extracted constraint value/gradient data. Third, we wish to extend our method to time-varying (e.g. temporal logic \cite{rss}) constraints. Finally, the scaling of GPs may hamper the learning of high-dimensional constraints; thus, we will explore scalable GP variants (e.g. sparse spectrum GP regression \cite{recht}). 

\bibliographystyle{IEEEtran}
\bibliography{condensed_ref}

\end{document}